\theoremstyle{plain}
\newtheorem{theorem}{Theorem}[section]
\theoremstyle{definition}
\theoremstyle{remark}
\renewcommand{\thetheorem}{\arabic{theorem}}
\newcommand\mypara[1]{\vspace{1pt}\noindent\textbf{#1}}
\title{
Beyond the Dirac Delta: Mitigating Diversity Collapse in Reinforcement Fine-Tuning for Image Generation
}
\author{%
\textbf{Jinmei Liu}$^{1}$ \quad
\textbf{Haoru Li}$^{1}$ \quad
\textbf{Zhenhong Sun}$^{2}$ \quad
\textbf{Chaofeng Chen}$^{3}$ \quad
\textbf{Yatao Bian}$^{4}$ \\[0.4em]
\textbf{Hongdong Li}$^{2}$ \quad
\textbf{Bo Wang}$^{1}$ \quad
\textbf{Daoyi Dong}$^{5}$ \quad
\textbf{Zhi Wang}$^{1}$ \\[0.7em]
{\small
$^{1}$Nanjing University \quad
$^{2}$Australian National University \quad
$^{3}$Wuhan University
}\\[-0.1em]
{\small
$^{4}$National University of Singapore \quad
$^{5}$University of Technology Sydney
}\\[0.4em]
{\small
\texttt{jmliu@smail.nju.edu.cn} \quad
\texttt{zhiwang@nju.edu.cn}
}
}
\begin{document}

\maketitle

\begin{abstract}

Reinforcement learning (RL) has emerged as a paradigm for fine-tuning large-scale generative models, such as diffusion and flow models, to align with complex human preferences and user-specified tasks. 
A fundamental limitation remains \textit{the curse of diversity collapse}, where the objective formulation and optimization landscape inherently collapse the policy to a Dirac delta distribution.
To address this challenge, we propose \textbf{DRIFT} (\textbf{D}ive\textbf{R}sity-\textbf{I}ncentivized Reinforcement \textbf{F}ine-\textbf{T}uning for Versatile Image Generation), an innovative framework that systematically incentivizes output diversity throughout the on-policy fine-tuning process, reconciling strong task alignment with high generation diversity to enhance versatility essential for applications that demand diverse candidate generations. 
We approach the problem across three representative perspectives: i) \textbf{sampling} a reward-concentrated subset that filters out reward outliers to prevent premature collapse; ii) \textbf{prompting} with stochastic variations to expand the conditioning space, and iii) \textbf{optimization} of the intra-group diversity with a potential-based reward shaping mechanism.
Experimental results show that DRIFT exhibits clear Pareto dominance in task alignment and generation diversity, achieving 7.19\%$\sim$93.40\% higher diversity at matched alignment and 13.23\%$\sim$60.13\% higher alignment at matched diversity.

\end{abstract}

\begin{figure*}[tb]\centering
\includegraphics[width=0.95\textwidth]{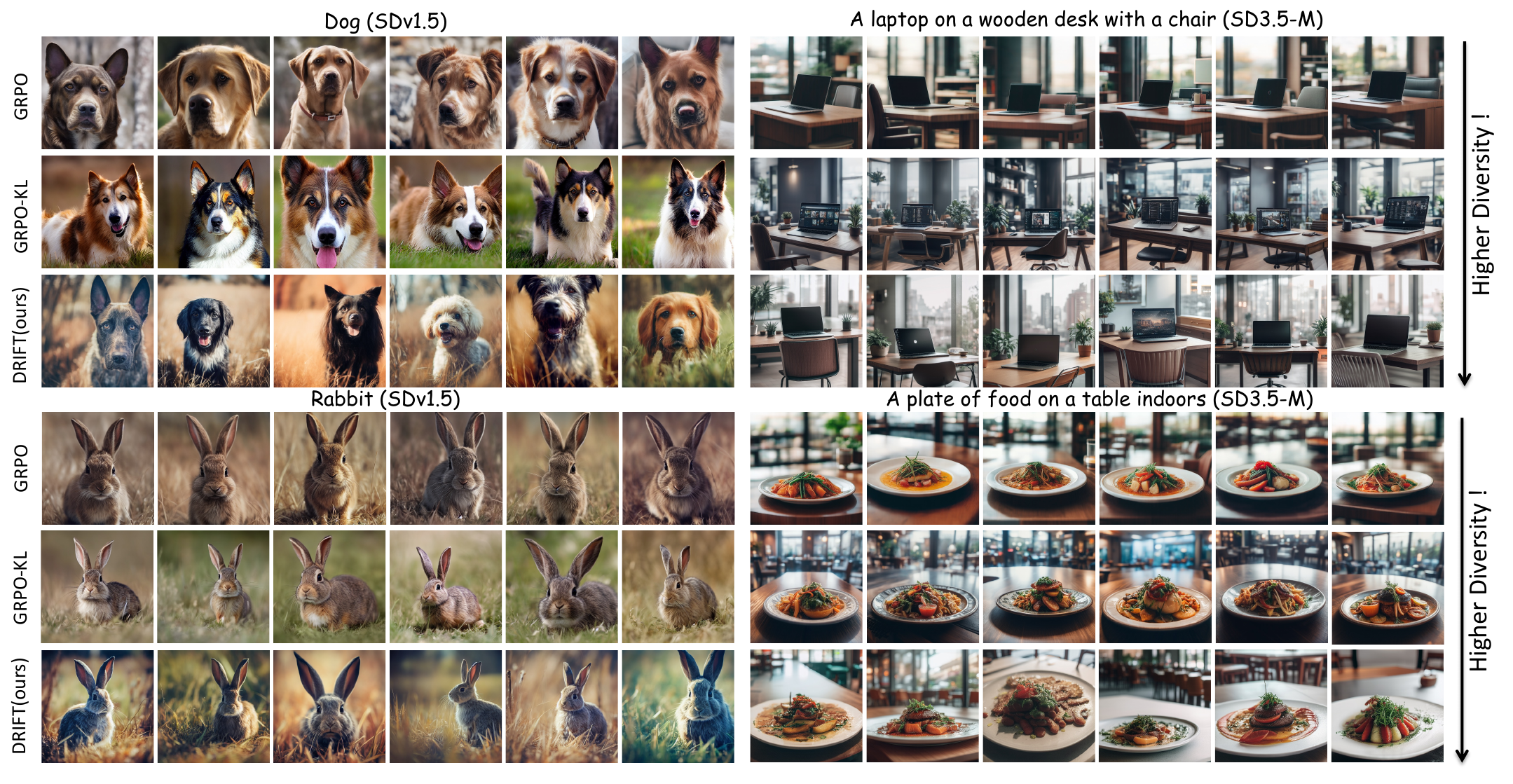}  
 \caption{ 
Image generation models fine-tuned via RL often suffer from \textit{diversity collapse}, resulting in repetitive outputs with near-identical attributes in breeds, poses, and backgrounds.
Instead, DRIFT maintains both high fidelity and diversity. 
The three sets were sampled using identical seeds and prompts, from fine-tuning SDv1.5 and SD3.5-M using PickScore as the reward function.
}
\label{fig:main1} 
\end{figure*}

\section{Introduction}\label{intro}
With advancements in large-scale models trained on massive data, image generation models (e.g., diffusion or flow models) have demonstrated remarkable success in realistic image synthesis under diverse forms of guidance~\citep{lipman2023flow,dombrowski2025image}, including image-based~\citep{zhang2024stability}, classifier-based~\citep{dhariwal2021diffusion}, and text-based guidance~\citep{rombach2022high}. 
Despite these advances, they are often deployed in scenarios that are not directly aligned with their likelihood-based training objectives. 
As a result, even state-of-the-art models~\citep{podell2024sdxl,betker2023improving} frequently exhibit misalignment with user prompts~\citep{jiang2024comat} or human preferences~\citep{xu2023imagereward}.
Recent studies explore using reinforcement learning (RL) to fine-tune image generation models by formulating the multi-step denoising process as a sequential decision-making task~\citep{fan2023dpok,black2024training}, achieving effective optimization for downstream tasks using only a black-box reward function~\citep{wallace2024diffusion,xue2025dancegrpo,liu2025flow}.

However, a fundamental limitation of RL fine-tuning methods is \textit{diversity collapse}, where the fine-tuned model achieves high rewards but loses output diversity, producing monotonous single-pattern images~\citep{barcelo2024avoiding,jena2025elucidating}.
This phenomenon is not only a consequence of the imperfect reward design, but an inherent ``curse'' of the RL fine-tuning paradigm, as it continuously increases the likelihood of high-reward regions while substantially narrowing the overall coverage of the model~\citep{yue2025does}.
A natural remedy is to incorporate Kullback–Leibler (KL) regularization between the fine-tuned and pre-trained models, thereby discouraging over-optimization~\citep{fan2023dpok,liu2025flow}.
However, simply constraining the policy to the base model limits versatility in learning new content, resulting in suboptimal alignment to diverse downstream tasks~\citep{hong2026margin}.
Recent studies~\citep{jena2025elucidating,sorokin2025imagerefl,gandikota2025distilling,sadat2024cads} explore various sampling strategies to balance alignment and diversity, but rely on hand-tuned inference-time heuristics and do not address diversity collapse during RL fine-tuning.
These limitations underscore the need for effective and efficient solutions to mitigate diversity collapse and enable RL for versatile image generation.

In this paper, we propose \textbf{DRIFT} (\textbf{D}ive\textbf{R}sity-\textbf{I}ncentivized Reinforcement \textbf{F}ine-\textbf{T}uning for Versatile Image Generation), an innovative framework that systematically injects diversity incentives into RL fine-tuning to achieve both strong task alignment and high generation diversity, promoting versatility essential for applications that demand diverse candidate generations. 
We first analyze the \textit{curse of diversity collapse} in the mainstream paradigm, showing that the objective and optimization landscape inevitably collapse the policy to a Dirac delta distribution.
This motivates explicitly embedding diversity incentives into RL fine-tuning, enabling broader exploration of diverse generations during on-policy learning.
To this end, we approach the problem from three key dimensions: 
i) \textbf{sampling}, we sample a reward-concentrated subset for policy update, filtering out reward outliers to prevent premature collapse;
ii) \textbf{prompting}, we introduce stochastic variations to prompt inputs, expanding the conditioning space from which image groups are generated;  
and iii) \textbf{optimization}, we formulate the intra-group diversity as an intrinsic reward and design a potential-based reward shaping scheme to preserve optimal policy invariance. 
We anticipate that this study will provide a coherent view for further research into diversity-preserving techniques for versatile image generation.

Experimental results show that DRIFT consistently achieves superior Pareto dominance regarding task alignment and diversity across various reward models.
Compared to competitive baselines, DRIFT yields a 7.19\%$\sim$ 93.40\% increase in diversity at matched alignment and a 13.23\%$\sim$ 60.13\% increase in alignment at matched diversity. 
Comprehensive analysis and case studies reveal that DRIFT mitigates diversity collapse from multiple perspectives.

\section{Related Work}
\mypara{RL Fine-Tuning of Generative Models.}
Methods can be roughly categorized into three kinds according to the reward function.
The first kind uses differentiable reward functions and backpropagates the reward function gradient through the full sampling procedure~\citep{prabhudesai2023aligning,clark2024directly,jia2025reward}.
The second uses preference-based reward functions derived from human comparison data to align with human preferences, such as DPOK~\citep{fan2023dpok}, Diffusion-DPO~\citep{wallace2024diffusion}, and Diffusion-KTO~\citep{li2024aligning}.
The third uses a general form of black-box reward functions.
DDPO~\citep{black2024training} formulates a policy gradient algorithm to optimize diffusion models, followed by B$^2$-DiffuRL~\citep{hu2025towards} that tackles the sparse-reward challenge.
Flow-GRPO~\citep{liu2025flow} first integrates online RL into flow matching models, with an ODE-to-SDE conversion to enable statistical sampling for exploration.
DanceGRPO~\citep{xue2025dancegrpo} further scales RL to large and diverse prompt sets for visual generation tasks.
The expanding studies underscore the efficacy and practical utility of RL fine-tuning for large-scale generative models~\citep{uehara2024understanding,chen2024enhancing,ye2024dreamreward,li2025branchgrpo,zheng2025diffusionnft}.

\mypara{Diversity in Image Generation.}
Maintaining generation diversity has long been a central challenge in the field of image synthesis~\citep{zhang2024large,dombrowski2025image}.
The standard remedy to preserve diversity is using KL regularization to constrain the deviation from the base model~\citep{zhai2025mira,ye2025data}, which can degrade the model's versatility in learning new content~\citep{hong2026margin}. 
\citet{barcelo2024avoiding} propose a hierarchical approach to preserve high-level diversity by only fine-tuning the low-level features at later denoising timesteps.
\citet{miao2024training} fine-tunes diffusion models using only a diversity reward to enhance generation diversity, while requiring a set of unbiased images for reference.
Another line approaches the problem by adopting various sampling strategies at inference time, such as condition-annealed sampling~\citep{sadat2024cads}, annealed importance guidance~\citep{jena2025elucidating}, and combined generation~\citep{sorokin2025imagerefl}.
However, these methods rely on hand-tuned heuristics for guided sampling and fail to address the fundamental problem of diversity collapse during RL fine-tuning.

DiverseGRPO~\citep{liu2025diversegrpo} and GARDO~\citep{he2025gardo} are representative concurrent efforts that address diversity preservation in RL fine-tuning.
DiverseGRPO preserves diversity via spectral clustering and stronger early-stage KL constraints, while GARDO mitigates reward hacking using gated KL penalties and diversity-weighted advantage reweighting.
In contrast, we delve into the curse of diversity collapse within the KL-constrained paradigm, tackling the challenge through a systematic framework spanning sampling, prompting, and optimization.

\section{Preliminaries}\label{sec:mdp}

\mypara{Denoising as Sequential Decision-Making.}
Diffusion/flow models transform a data distribution $p(\mathbf{x}_0|\mathbf{c})$ over a dataset of samples $\mathbf{x}_0$ and prompts $\mathbf{c}$ through a sequential Markovian forward process $q(\mathbf{x}_t|\mathbf{x}_{t-1})$ that iteratively adds noise to data.
Sampling from a trained model $\theta$ begins with drawing a random $\bm{x}_T\!\sim\!\mathcal{N}(\mathbf{0},\mathbf{I})$ and following the reverse process $p_\theta(\bm{x}_{t-1}|\bm{x}_t,\bm{c})$ to produce a trajectory $\{\bm{x}_T,\bm{x}_{T-1},...,\bm{x}_0\}$ ending with the clean $\bm{x}_0$. 
Following common practice~\citep{rafailov2023direct,liu2025flow}, 
RL fine-tuning of a pre-trained model is formulated as a KL-constrained reward maximization problem as
\begin{equation}\label{eq:obj}
J(\theta)= \mathbb{E}_{\bm{c},\bm{x}_0\sim p_\theta(\bm{x}_0|\bm{c})}\left[r(\bm{x}_0,\bm{c})\right]
- \beta\cdot\mathbb{D}_{\text{KL}}\left[\pi_\theta(\bm{x}_0|\bm{c})\|\pi_{\text{ref}}(\bm{x}_0|\bm{c})\right],
\end{equation}
where $r(\bm{x}_0,\bm{c})$ is the reward defined over samples and prompts, and $\beta$ controls the deviation from a reference policy $\pi_{\text{ref}}$, usually the initial pre-trained model.
The iterative denoising process is mapped to an MDP as
$\bm{s}_t \triangleq (\bm{x}_t, \bm{c}, t),~~~\bm{a}_t \triangleq \bm{x}_{t-1}$, 
$ \pi(\bm{a}_t|\bm{s}_t) \triangleq p_\theta(\bm{x}_{t-1}|\bm{x}_t,\bm{c})$,
$ P(\bm{s}_{t+1}|\bm{s}_t,\bm{a}_t) \triangleq (\delta_{\bm{c}}, \delta_{t-1}, \delta_{\bm{x}_{t-1}})$, $R(\bm{s}_t,\bm{a}_t) \triangleq \mathds{1}(t=0)\cdot r(\bm{x}_0,\bm{c})$,
where $\bm{s}_t/\bm{a}_t$ denotes state/action, $\pi$ is the policy, and $\delta_y$ is the Dirac delta distribution with nonzero density only at $y$.
The generative model serves as the policy network $\pi_\theta$ and parameterizes the transition kernel $P$.

\mypara{Group Relative Policy Optimization (GRPO).}
We build upon GRPO~\citep{shao2024deepseekmath,xue2025dancegrpo,liu2025flow} to estimate policy gradients for the objective in Eq.~\ref{eq:obj}.
Given a prompt $\bm{c}$, a group of $G$ images $\{\bm{x}_0^1,...,\bm{x}_0^G\}$ is generated from the sampling distribution $p_{\theta_{\text{old}}}(\bm{x}_0|\bm{c})$ with previous parameters $\theta_{\text{old}}$, yielding rewards $\{r_1,...,r_G\}$.
The advantage for each sample $\bm{x}_0^i$ is calculated from group-level comparisons as
\begin{equation}\label{eq:adv}
A_i=\frac{r_i-\operatorname{mean}\left(\left\{r_1, r_2, \cdots, r_G\right\}\right)}{\operatorname{std}\left(\left\{r_1, r_2, \cdots, r_G\right\}\right)}.
\end{equation}
The policy $\pi_\theta$ is then updated by maximizing:
\begin{equation}\nonumber 
\mathbb{E}_{\bm{c},\bm{x}_0^i\sim p_{\theta_{\text{old}}}}
\left[\frac{1}{G}\sum_{i=1}^G\sum_{t=0}^T \text{CLIP}\left(\rho_{t, i} A_i\right) - \beta\mathbb{D}_{\text{KL}}[\pi_\theta||\pi_{\text{ref}}]\right],
\end{equation}
where $\rho_{t,i}\!=\!\frac{\pi_\theta\left(\mathbf{a}_{t, i} \mid \mathbf{s}_{t, i}\right)}{\pi_{\theta_{\text{old}}}\left(\mathbf{a}_{t, i} \mid \mathbf{s}_{t, i}\right)}$ is the importance sampling ratio at timestep $t$ in sample $\bm{x}_0^i$.
The clipped objective $\text{CLIP}(\rho_{t, i} A_i) \!=\! \min \left(\rho_{t, i} A_i, \operatorname{clip}\left(\rho_{t, i}, 1-\epsilon, 1+\epsilon\right) A_i\right)$ ensures stable updates within the trust region~\citep{schulman2017proximal}, where $\epsilon$ is a preset threshold constraining the discrepancy between the target policy $\pi_{\theta}$ and previous $\pi_{\theta_{\text{old}}}$.

\begin{figure*}[tb]\centering

\includegraphics[width=0.95\textwidth]{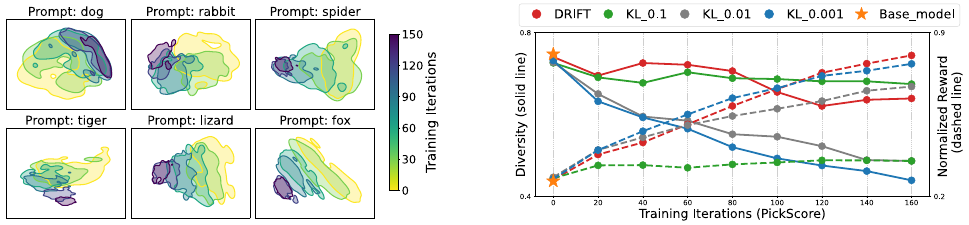} 
\caption{
\textbf{Left}: Diversity collapse of the policy distribution $\pi_\theta(\bm{x}_0|\bm{c})$ during GRPO fine-tuning, visualized by kernel density estimation contours of DreamSim embeddings, where the support area shrinks monotonically over training.
\textbf{Right}: Comparison of reward-diversity tradeoffs, where DRIFT outperforms GRPO-KL by preserving significantly higher diversity at equivalent reward levels.
}
\label{fig:collapse} 
\end{figure*}

\section{The Curse of Diversity Collapse}
Using a general derivation from reward-weighted regression~\citep{peters2007reinforcement}, the optimal solution to the reward maximization objective in Eq.~\ref{eq:obj} takes the form:
\begin{equation}\label{eq:pi_opt}
    \pi^*(\bm{x}_0|\bm{c})=\frac{1}{Z(\bm{c})}\pi_{\text{ref}}(\bm{x}_0|\bm{c})\exp{\left(\frac{1}{\beta}r(\bm{x}_0,\bm{c})\right)},
\end{equation}
where $Z(\bm{c})=\sum_{\bm{x}_0}\pi_{\text{ref}}(\bm{x}_0|\bm{c})\exp{(\frac{1}{\beta}r(\bm{x}_0,\bm{c}))}$ is the partition function. 
Appendix~\ref{app:pi_opt} presents a complete derivation.

A large $\beta$ leads to under-optimization, where the model becomes too conservative to effectively explore the reward landscape, ultimately failing to achieve significant gains in task alignment.
Lowering $\beta$ induces a rapid sharpening of the $\pi^*$ distribution, as the exponential weighting term amplifies minor reward variations into massive probability gaps.
When $\beta$ approaches zero, i.e., the objective in Eq.~\ref{eq:obj} reduces to unconstrained reward maximization.
Consequently, the model suffers from \textit{total diversity collapse} as the policy converges to the singular maximum-reward mode as
\begin{equation}\label{eq:beta_to_zero}
\lim_{\beta\to 0^+}\pi_\beta^*(\bm{x}_0|\bm{c}) = \delta\left(\bm{x}_0-\mathop{\arg\max}\nolimits_{\bm{x}_0}r(\bm{x}_0,\bm{c})\right),   
\end{equation}
where $\delta(\cdot)$ is the Dirac delta distribution that collapses all probability mass onto the origin
(for details, see Appendix~\ref{app:pi_beta}).
From the optimization perspective, the policy gradient for the objective in Eq.~\ref{eq:obj} is derived as
\begin{equation}\label{eq:gradient}
\nabla_\theta J(\theta)
= \mathbb{E}_{\pi_\theta} \left[ \nabla_\theta \log \pi_\theta \cdot \left( r - \beta \log \frac{\pi_\theta}{\pi_{\text{ref}}} \right) \right]
= \mathbb{E}_{\pi_\theta} [\underbrace{\nabla_\theta \log \pi_\theta \cdot r}_{\text{Reward Pull}} - \underbrace{\beta \nabla_\theta \log \pi_\theta \cdot\log \frac{\pi_\theta}{\pi_{\text{ref}}}}_{\text{Diversity Push-back}}].
\end{equation}
Since the \textit{on-policy gradient} is calculated as an expectation over the current policy $\pi_\theta$, as soon as $\pi_\theta$ shifts slightly toward a high-reward region, the model starts sampling from that region more frequently.
The KL constraint provides a push-back signal only for samples generated by $\pi_\theta$; once low-reward yet diverse regions are no longer sampled, their gradient signal vanishes, consistent with observations in~\citep{ye2025data}.
As an on-policy method, RL further drives the model toward out-of-distribution regions where $\pi_{\mathrm{ref}}$ offers insufficient coverage and weak regularization, an effect amplified by multi-step Markovian sampling as discrepancies accumulate along the denoising trajectory.
This analysis shows that diversity collapse arises from (i) a fundamental limitation of the objective formulation (Eqs.~\ref{eq:pi_opt}–\ref{eq:beta_to_zero}) and (ii) local minima in the optimization landscape (Eq.~\ref{eq:gradient}), as empirically validated in Figure~\ref{fig:collapse}, making its mitigation essential for scaling RL compute toward versatile image generation.

\begin{figure*}[tb]\centering
\includegraphics[width=0.95\textwidth]{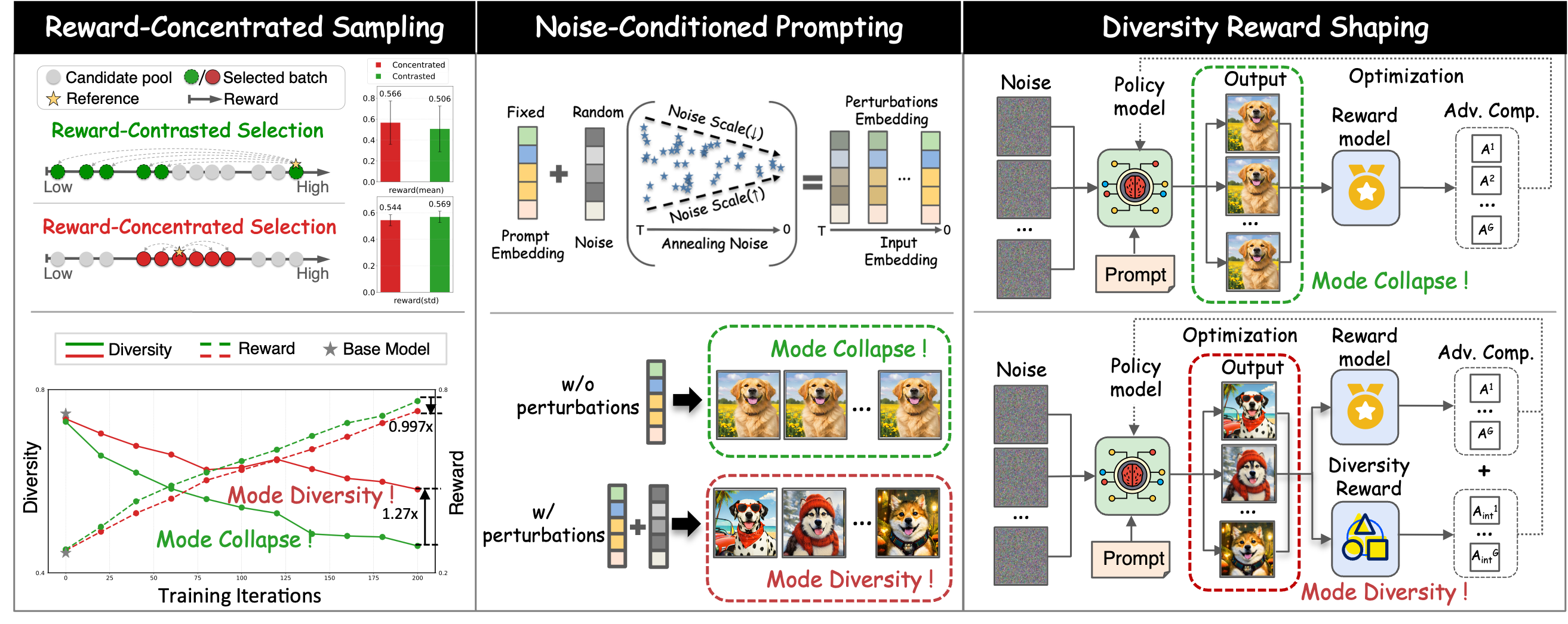}  
 \caption{Overview of DRIFT, a unified framework for mitigating diversity collapse across sampling, prompting, and optimization. 
 }
\label{fig:insight} 
\end{figure*}

\section{Mitigating Diversity Collapse} 
In this section, we present DRIFT, a framework for versatile image generation that embeds diversity incentives into on-policy fine-tuning from three dimensions: sampling, prompting, and optimization.

\subsection{Reward-Concentrated Sampling}\label{sec:sampling}

As shown in Eq.~\ref{eq:pi_opt}, RL fine-tuning shifts the reference policy toward an optimal distribution through the exponential weighting term $\exp(\frac{1}{\beta}r(\bm{x}_0, \bm{c}))$. 
In a GRPO-based setting, substantial reward variance within a group can be amplified into significant probability gaps in the updated policy, thereby exacerbating diversity collapse.
Motivated by this insight, we update the policy using a concentrated subset that suppresses outliers which would otherwise over-sharpen the policy distribution.

Specifically, we adopt a simple selective sampling regime during fine-tuning.
At each training iteration, the policy generates a pool of $2G$ candidate outputs for each prompt, from which $G$ samples are selected to construct the training batch.
For each candidate, we compute its cumulative distance to its $G-1$ nearest neighbors under a sample-level distance metric.
The sample yielding the minimal sum is identified as the reference point, and the training subset is constructed by grouping this reference with its $G-1$ nearest neighbors.
This procedure selects a locally coherent group and empirically produces a more concentrated, yet still informative, reward distribution within the selected group.
The details are provided in Appendix~\ref{app:sample}.

\subsection{Noise-Conditioned Prompting}\label{sec:prompt}
Beyond sample selection, we investigate how inducing stochasticity at the prompting stage influences output diversity during RFT.
Specifically, we introduce controlled noise perturbations to the prompt embeddings to induce a broader distribution of generated images.
For a given prompt $\bm{c}$, we add Gaussian noise to its embedding $\bm{e}$ at each denoising timestep $t$, resulting in a time-dependent perturbed prompt embedding $\tilde{\bm{e}}_t$ as
\begin{equation}
\tilde{\bm{e}}_t = \sqrt{w(t)}\, \bm{e} + \eta \cdot \sqrt{1 - w(t)}\, \bm{n}, 
\quad \bm{n} \sim \mathcal{N}(\bm{0}, \bm{I}),
\end{equation}
where $\eta$ controls the overall noise scale and $w(t)\in [0,1]$ is an annealing function that decreases with timestep $t$.
In this manner, larger perturbations are applied at earlier timesteps and gradually reduced as the denoising process proceeds. 
The annealing schedule aligns prompt-level stochasticity with the diffusion dynamics, encouraging diverse global structures while preserving fine-grained semantic consistency at later stages. 
Details are provided in Appendix~\ref{app:prompt_noise}.

\subsection{Diversity Optimization via Reward Shaping}

We incorporate the output diversity as an auxiliary objective, enabling its direct optimization throughout the training process.
A straightforward approach is to include the intra-group diversity as an additional reward to guide fine-tuning. 
However, such naive reward shaping can shift the optimal policy, potentially misleading the model toward learning suboptimal policies~\citep{ng1999policy}.
Therefore, we employ a potential-based reward shaping scheme to allow for the inclusion of the intrinsic reward without distorting the original objective~\citep{wang2023efficient,muller2025improving}.
Formally, we formulate the \textbf{intrinsic reward} $R_{\text{int}}$ as the difference between the diversities of adjacent states as 
\begin{equation}\label{eq:rint}
    R_{\text{int}}(s_t,a_t,s_{t+1}) = \gamma d(s_{t+1}) - d(s_t),
\end{equation}
where the diversity function $d(\cdot)$ is exactly the potential function over states $s\in\mathcal{S}$.
In the setting of the denoising MDP in Sec.~\ref{sec:mdp}, the specific intrinsic reward becomes
\begin{equation}\nonumber 
    R_{\text{int}}\!\left((\bm{x}_t^i,\bm{c},t),\bm{x}_{t-1}^i,(\bm{x}_{t-1}^i,\bm{c},t\!-\!1)\right) \!=\! \gamma d(\bm{x}_{t-1}^i) \!-\! d(\bm{x}_t^i),
\end{equation}
where $i=1,...,G$. 
Since our method is built upon GRPO, which calculates policy gradients at the trajectory level, the intrinsic reward for a complete denoising
process $\tau=(\bm{x}_T,\bm{x}_{T-1},...,\bm{x}_0)$ is computed as 
\begin{equation}\label{eq:rint2}
\begin{aligned}
R_{\text{int}}(\tau^i)
&= \sum\nolimits_{t=0}^{T-1}\gamma^t R_{\text{int}}(s_t,a_t,s_{t+1})
= \sum\nolimits_{t=0}^{T-1}\gamma^t\!\left[\gamma d(\bm{x}_{T-t-1}^i) - d(\bm{x}_{T-t}^i)\right] \\
&= \gamma^T d(\bm{x}_0^i) - d(\bm{x}_T^i)
= \gamma^T d(\bm{x}_0^i).
\end{aligned}
\end{equation}

Here, we assume that the diversity of initial noise $\bm{x}_T\sim\mathcal{N}(\bm{0},\bm{I})$ is zero, i.e., $d(\bm{x}_T^i)=0$. The diversity of the final output $d(\bm{x}_0^i)$ is defined as the intra-group dissimilarity with respect to other samples within the same generation group.
Appendix~\ref{app:div} provides details on $d(\bm{x}_0^i)$ computation.

The diversity-incentivized intrinsic reward enables the model to maintain higher output diversity without compromising the primary objective of task alignment.
This mechanism effectively prevents the fine-tuned model from collapsing into narrow solution modes, promoting a balanced tradeoff between generation quality and diversity.
Accordingly, the new reward function is defined as
\begin{equation}\label{eq:reward_shaping}
\tilde{R}(\tau^i) = R(\tau^i) + \lambda\cdot R_{\text{int}}(\tau^i), 
\end{equation}
where $\lambda$ is the shaping ratio that balances between quality and diversity.
Finally, we substitute the augmented reward function $\tilde{r}_i=\tilde{R}(\tau^i)$ for the original one $r_i=R(\tau^i)$ in Eq.~\ref{eq:adv} to compute the advantage under the GRPO framework.

Incorporating the intrinsic reward $R_\text{int}$ transforms the original MDP $M=(\mathcal{S},\mathcal{A},P,R,\gamma)$ into $\tilde{M}=(\mathcal{S},\mathcal{A},P,\tilde{R},\gamma)$, where $\tilde{R}=R+\lambda R_{\text{int}}$.
As we optimize a policy for the transformed MDP $\tilde{M}$ with the intention of deploying it within the original MDP $M$, it is critical to ensure that reward shaping does not bias the agent toward suboptimal solutions.
Theorem~\ref{theo:shaping} establishes that optimal policy invariance is preserved when incorporating intra-group diversity as an intrinsic reward, providing a theoretical foundation for our reward shaping mechanism.
Appendix~\ref{app:theorem} presents a detailed proof.

\begin{theorem}[Optimal Policy Invariance]\label{theo:shaping}
    Let $M\!=\!(\mathcal{S},\mathcal{A},P,R,\gamma)$ denote the MDP for the task of fine-tuning diffusion models with RL.
    $d(\cdot)\!:S\mapsto \mathbb{R}$ is a real-valued function that computes the intra-group diversity $d(s)$ of the state $s$ within a group of generation samples.
    We formulate $R_{\text{int}}(\cdot)\!:S\!\times\! A\!\times\! S\mapsto \mathbb{R}$ as an intrinsic reward function that is the difference between the diversities of adjacent states, such that for all $s\!\in\! S, a\!\in\! A, s'\!\in\! S$, $R_{\text{int}}(s,a,s')\!=\!\gamma d(s')\!-\!d(s)$. 
    Then, with any constant balancing ratio $\lambda$, every optimal policy in the transformed MDP $\tilde{M}\!=\!(\mathcal{S},\mathcal{A},P,R+\lambda R_{\text{int}},\gamma)$ will also be an optimal policy in $M$, and vice versa.
\end{theorem}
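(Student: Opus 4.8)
The plan is to recognize the shaping term as an instance of \emph{potential-based reward shaping} and then adapt the classical invariance argument of \citet{ng1999policy} to the finite-horizon denoising MDP. Concretely, I would define the potential $\Phi(s) = \lambda\, d(s)$, so that the scaled intrinsic reward satisfies $\lambda R_{\text{int}}(s,a,s') = \gamma\Phi(s') - \Phi(s)$ exactly, and the augmented reward takes the form $\tilde{R}(s,a,s') = R(s,a,s') + \gamma\Phi(s') - \Phi(s)$. This reduces the statement to showing that adding a potential difference of this form leaves the set of optimal policies unchanged in both directions.

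The core step is a relationship between the optimal action-value functions of the two MDPs. I would posit the ansatz $\tilde{Q}^*(s,a) = Q_M^*(s,a) - \Phi(s)$ and verify that it satisfies the Bellman optimality equation for $\tilde{M}$. Substituting the ansatz into the right-hand side, the key cancellation is that the $\gamma\Phi(s')$ contributed by the shaping term is exactly annihilated by the $-\gamma\Phi(s')$ that factors out of $\gamma\max_{a'}\tilde{Q}^*(s',a') = \gamma\max_{a'}Q_M^*(s',a') - \gamma\Phi(s')$; here $\Phi(s')$ leaves the maximization untouched because it does not depend on the action $a'$. What remains is precisely $Q_M^*(s,a) - \Phi(s)$, confirming the ansatz, and uniqueness of the Bellman fixed point (guaranteed here by backward induction over the finite denoising horizon) then pins down $\tilde{Q}^*$.

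With the identity $\tilde{Q}^*(s,a) = Q_M^*(s,a) - \Phi(s)$ in hand, the conclusion is immediate: since the offset $-\Phi(s)$ is constant in $a$, we have $\arg\max_a \tilde{Q}^*(s,a) = \arg\max_a Q_M^*(s,a)$ at every state, so a policy is greedy with respect to $\tilde{Q}^*$ if and only if it is greedy with respect to $Q_M^*$; equivalently, the sets of optimal policies of $\tilde{M}$ and $M$ coincide. The converse direction (``and vice versa'') then follows by symmetry, since $M$ is itself the potential-based shaping of $\tilde{M}$ with potential $-\Phi$, so the identical argument applies with the roles of the two MDPs reversed.

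I expect the main obstacle to be the careful treatment of the episode boundary rather than the algebra of the Bellman cancellation. The telescoping computation of Eq.~\ref{eq:rint2} collapses the per-step potential differences to a single boundary term, and the invariance hinges on this residual being independent of the agent's choices. I would therefore treat $\bm{x}_0$ as an absorbing terminal state and use the stated assumption $d(\bm{x}_T)=0$ to ensure the shaped return over a full trajectory reduces to a state-independent offset, so that it cannot bias the $\arg\max$. Making this boundary bookkeeping precise---so that the fixed-point argument of \citet{ng1999policy} transfers cleanly to the finite denoising chain in which reward is granted only at the terminal step---is the delicate part of the proof.
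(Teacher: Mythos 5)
Your proposal is correct and follows essentially the same route as the paper's proof in Appendix~\ref{app:theorem}: define the shifted value function $Q_M^*(s,a)-\lambda d(s)$, verify it satisfies the Bellman optimality equation of the shaped MDP via the $\gamma\Phi(s')$ cancellation, conclude that the action-independent offset preserves the $\arg\max$, and obtain the converse by applying the argument with the roles of $M$ and $\tilde{M}$ swapped and the potential negated. Your additional care about the terminal-state bookkeeping and the uniqueness of the Bellman fixed point over the finite denoising horizon is a refinement the paper leaves implicit, not a different argument.
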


The ingenuity of our reward-shaping design lies in that, by defining the shaping reward as the diversity difference between adjacent states (Eq.~\ref{eq:rint}), the intrinsic reward for a complete denoising trajectory is derived as the diversity of the clean image, as in Eq.~\ref{eq:rint2}. 
This elegant formulation bypasses the need to evaluate intermediate noised images, which significantly reduces computation while minimizing the influence of noisy factors on the diversity calculation.

\mypara{Clipped Reward Shaping.}
Although DRIFT preserves optimal policy invariance under reward shaping, the model may still over-exploit intrinsic rewards at the expense of the primary objective, i.e., the \textit{reward hacking} phenomenon, where an RL agent exploits flaws or ambiguities in the reward function to achieve high rewards without truly solving the intended task.
Because diversity rewards are high-variance and outlier-sensitive, noisy or low-quality outputs off the target manifold can yield exaggerated diversity signals, steering optimization toward semantically dispersed but sub-optimal samples.
To address this issue, we clip the diversity reward as
$R_{\text{int}}(\tau^i) = \mathrm{clip}\!\left(R_{\text{int}}(\tau^i),\, 0,\, \sigma\right)$, where $\sigma$ is an upper bound that prevents excessive exploitation of the shaping reward.

\mypara{Decoupled Advantage Computation.}
During training, both the primary reward $R$ and the intrinsic reward $R_{\text{int}}$ evolve continuously as the policy improves, yet they typically operate on disparate and dynamically shifting scales.
The magnitude of the intrinsic reward remains stable, since it is calculated based on the relative distance between samples within the group.
However, the magnitude of the primary reward can vary substantially across different training stages and reward models.
This complicates the development of a flexible weighting scheme for aggregating the two rewards as in Eq.~\ref{eq:reward_shaping}.
To circumvent this issue, we instead compute the advantages for each reward component independently, and merge them at the advantage level as $\tilde{A} = A + \lambda\cdot A_{\text{int}}$.
By decoupling advantage estimation, each reward signal is normalized relative to its own baseline before aggregation.
This design mitigates sensitivity to scale mismatch and temporal fluctuations, facilitating more robust and interpretable control over diversity regularization.

\begin{figure*}[t]\centering
\includegraphics[width=0.95\textwidth]{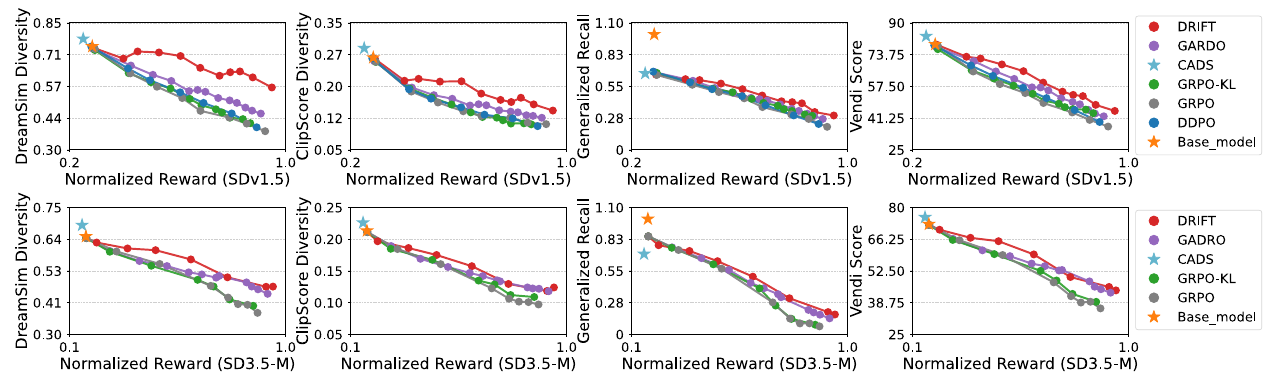}  
 \caption{
Reward-diversity comparison between DRIFT and baselines, with SDv1.5 and SD3.5-M fine-tuned using PickScore reward.
 }
\label{fig:sd1.5} 
\end{figure*}

\begin{table*}[t]
\vspace{-8pt}
\centering
\caption{
Comparison of diversity and quality metrics under different reward functions and backbones.
Diversity is evaluated using DreamSim, CLIP-based diversity, Generalized Recall, and Vendi Score, while quality is measured by the normalized reward.
Percentage improvements represent the average relative gains of \textbf{DRIFT} over the respective baselines in each group under identical settings.
}

\setlength{\tabcolsep}{5pt}
\resizebox{\textwidth}{!}{
\begin{tabular}{c l ccccc ccccc}
\toprule
\multirow{2}{*}{Backbone}
& \multirow{2}{*}{Algorithm}
& \multicolumn{5}{c}{\textbf{PickScore}}
& \multicolumn{5}{c}{\textbf{HPSv2}} \\
\cmidrule(lr){3-7} \cmidrule(lr){8-12}
& 
& DreamSim ($\uparrow$) & CLIP ($\uparrow$) & Recall ($\uparrow$) & Vendi ($\uparrow$) & Reward ($\uparrow$)
& DreamSim ($\uparrow$) & CLIP ($\uparrow$) & Recall ($\uparrow$) & Vendi ($\uparrow$) & Reward ($\uparrow$) \\
\midrule

\multirow{5}{*}{SDv1.5}
& Base model
& 0.7476 & 0.2678 & 1.0000 & 78.96 & 0.2653
& 0.7476 & 0.2678 & 1.0000 & 78.96 & 0.3016 \\

& CADS
& 0.7805 & 0.2899 & 0.6614	& 83.15 &	0.2323
& 0.7805 & 0.2899 & 0.6614  & 83.15 & 0.2629  \\

\cmidrule(lr){2-12}
& DDPO
& 0.3974 & 0.1058 & 0.2233 & 39.23 & 0.5851
& 0.4267 & 0.1076 & 0.3303 & 41.53 & 0.5400 \\

& GRPO
& 0.3804 & 0.1105 & 0.1989 & 36.90 & 0.5927
& 0.4245 & 0.1102 & 0.2883 & 40.79 & 0.5409 \\

& GRPO-KL
& 0.4153 & 0.1100 & 0.2945 & 43.73 & 0.5503
& 0.4342 & 0.1120 & 0.3496 & 44.14 & 0.5961 \\

& GARDO
& 0.4567 & 0.1256 & 0.2671 & 42.13 & 0.6176
& 0.4914 & 0.1312 & 0.3730 & 45.24 & 0.6473 \\

& DRIFT
& \textbf{0.5698} & \textbf{0.1426} & \textbf{0.2969} & \textbf{44.89} & \textbf{0.9187}
& \textbf{0.5737} & \textbf{0.1439} & \textbf{0.3734} & \textbf{45.93} & \textbf{0.9252} \\
& Improve (avg.) 
& +38.78\%	& +26.75\%	& +23.55\%	& +11.32\%	& +56.93\%
& +29.62\%	& +25.62\%	& +12.37\%	& +7.19\%	& +60.13\% \\

\midrule

\multirow{4}{*}{SD3.5-M}
& Base model
& 0.6471 & 0.2131 & 1.0000 & 72.72 & 0.1726
& 0.6468 & 0.2121 & 1.0000 & 72.74 & 0.2322 \\

& CADS
& 0.6871 & 0.2258 & 0.6960 & 75.71 & 0.1574
& 0.6871 & 0.2258 & 0.6960 & 75.71 & 0.2167  \\

\cmidrule(lr){2-12}
& GRPO
& 0.3761 & 0.0970 & 0.0689 & 36.24 & 0.7354
& 0.3868 & 0.1418 & 0.0686 & 33.79 & 0.7722 \\

& GRPO-KL
& 0.4010 & 0.1088 & 0.0832 & 39.08 & 0.7412
& 0.4028 & 0.1405 & 0.0898 & 35.72 & 0.7908 \\

& GARDO
& 0.4436 & 0.1179 & 0.1385 & 43.05 & 0.8538
& 0.4099 & 0.1457 &	0.0745 & 31.67 & 0.8217 \\

& DRIFT
& \textbf{0.4692} & \textbf{0.1238} & \textbf{0.1719} & \textbf{44.04} & \textbf{0.9130}
& \textbf{0.4462} & \textbf{0.1590} & 0.0880 & \textbf{38.97} & \textbf{0.8995} \\
& Improve (avg.) 
& +15.84\% & +15.47\% & +93.40\% & +12.17\% & +18.08\%
& +11.66\% & +11.48\%  & +14.80\% & +15.83\% & +13.23\%\\

\bottomrule
\end{tabular}}
\label{tab:diversity_quality}
\end{table*}

\section{Experiments}\label{experiments}

\mypara{Models.}
Following prior works~\cite{black2024training, liu2025flow}, we adopt Stable Diffusion v1.5 (SDv1.5)~\cite{rombach2022high} and Stable Diffusion 3.5 Medium (SD3.5-M)~\cite{esser2024scaling} as the backbone models. See Appendix~\ref{app:imple} for implementation details.

\textbf{Evaluation Measures.}
Model evaluation is centered on the alignment reward and generation diversity.
We evaluate the fine-tuned models using images generated from a fixed set of representative prompts. We consider four metrics for a well-rounded assessment of generation diversity: DreamSim Diversity, ClipScore Diversity, Generalized Recall, and Vendi Score. See Appendix~\ref{app:diversity_metrics} for more details.

We compare DRIFT to competitive RL fine-tuning baselines under identical experimental settings, including the base model as well as CADS, DDPO, GRPO, GRPO-KL and GARDO. See Appendix~\ref{app:baselines} for more details.
All models are trained with two reward functions, \textbf{PickScore}~\cite{kirstain2023pick} and \textbf{HPSv2}~\cite{wu2023human}, to align with human preferences and text relevance. Rewards are normalized to $[0,1]$. See Appendix~\ref{app:reward} for reward details.

\begin{figure*}[ht]
\centering
\includegraphics[width=0.95\textwidth]{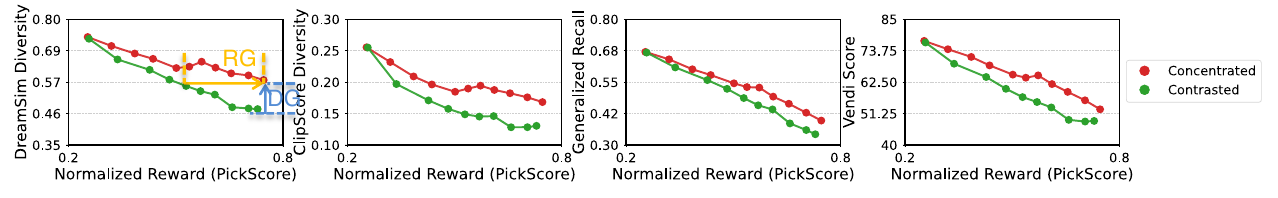} 
\caption{
Comparison of reward-diversity tradeoff between reward-concentrated and reward-contrasted sampling, with SDv1.5 fine-tuned using PickScore reward. 
The record points on the Pareto frontier are collected from checkpoints at intervals during fine-tuning.
\textbf{DG} denotes \textbf{D}iversity \textbf{G}ain at equivalent reward levels and \textbf{RG} denotes \textbf{R}eward \textbf{G}ain at equivalent diversity levels (Details in Appendix~\ref{app:rg_dg}). 
Both metrics are reported in the histograms (e.g., Figure~\ref{fig:sample_fig}) and tables (e.g., Table~\ref{tab:diversity_quality}). }
\label{fig:sample} 
\end{figure*}

\begin{figure*}[ht]
\vspace{-4pt}
\centering
\includegraphics[width=0.95\textwidth]{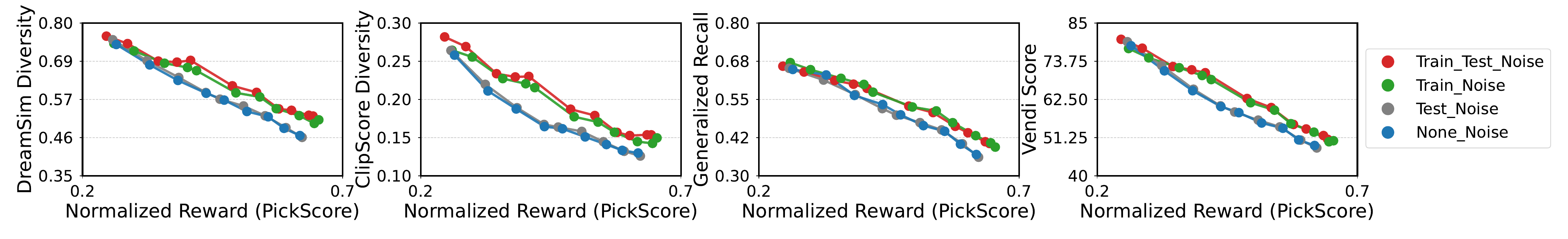}  
\caption{
Comparison of reward-diversity tradeoff for prompting with and without noise, with SDv1.5 fine-tuned using PickScore reward. 
}
\label{fig:prompt} 
\end{figure*}

\begin{figure*}[!ht]
\vspace{-6pt}
\centering
\begin{minipage}[t]{0.48\textwidth}
    \centering
    \includegraphics[width=0.9\linewidth]{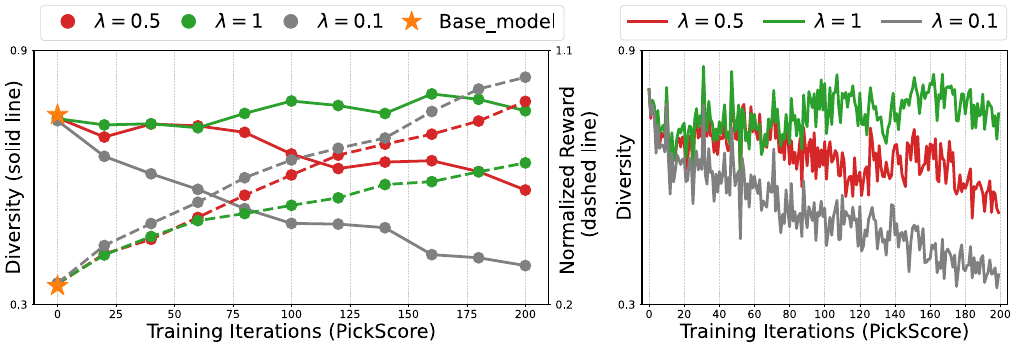}
    \caption{Effect of the coefficient $\lambda$ in DRIFT.
    \textbf{Left}: Evolution of the reward--diversity tradeoff.
    \textbf{Right}: Diversity during training.}
    \label{fig:lambda}
\end{minipage}
\hfill
\begin{minipage}[t]{0.48\textwidth}
    \centering
    \includegraphics[width=0.9\linewidth]{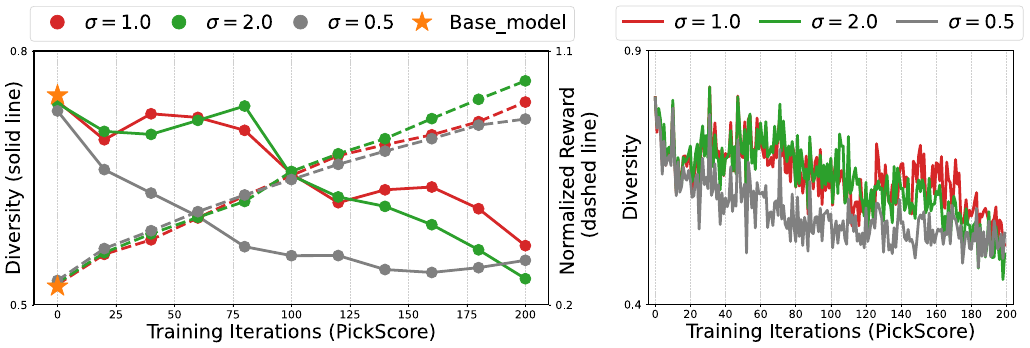}
    \caption{Effect of the CLIP term $\sigma$ in DRIFT.
    \textbf{Left}: Evolution of the reward--diversity tradeoff.
    \textbf{Right}: Diversity during training.}
    \label{fig:clip}
\end{minipage}
\end{figure*}

\subsection{Study of Diversity-Aware Reward Shaping}\label{sec:rewarding}
We evaluate DRIFT, which directly optimizes intra-group diversity via potential-based reward shaping, and show its superiority over competitive baselines.
Figures~\ref{fig:sd1.5} and \ref{fig:hpsv2} (Appendix~\ref{app:more_diversity_results}) present Pareto frontiers for fine-tuning SDv1.5 and SD3.5-M with PickScore and HPSv2.
GRPO-KL achieves only marginal diversity gains over GRPO and DDPO due to KL regularization that preserves the base model’s output distribution to some extent, while GARDO provides stronger improvements.
We also include CADS, an inference-time strategy, for reference. While CADS attains higher diversity than both the base model and DRIFT, it exhibits a moderate reduction in the primary reward, suggesting a less favorable balance between diversity and alignment.
In contrast, DRIFT establishes a superior Pareto frontier, achieving a better balance between reward maximization and diversity preservation.
The decline in diversity during RL fine-tuning is substantially reduced, confirming that DRIFT effectively mitigates the trend of diversity collapse.

Table~\ref{tab:diversity_quality} summarizes the primary quantitative results for DRIFT and baselines on SDv1.5 and SD3.5-M using PickScore and HPSv2.
DRIFT achieves consistent diversity gains across all four metrics compared to baselines.
Notably, DRIFT yields \textbf{a 30\%+ diversity gain} in DreamSim Diversity and  \textbf{a 20\%+ diversity gain} in Generalized Recall with PickScore on SDv1.5,  and \textbf{a 20\%+ diversity gain} in DreamSim and CLIP-based diversity with HPSv2 on SDv1.5.
Moreover, DRIFT obtains a substantial reward gain of $13.23\%\sim60.13\%$, highlighting its superior capability in aligning models with downstream task requirements.
Qualitative results in Figure~\ref{fig:main1} and Figures~\ref{fig:main4}$\sim$\ref{fig:main3} (Appendix~\ref{app:more_diversity_results}) further show that DRIFT produces diverse generations without compromising visual quality.
Overall, DRIFT consistently outperforms baselines in both alignment reward and generation diversity, effectively expanding image diversity while preserving quality.

\subsection{Study of Reward-Concentrated Sampling}\label{sec:exp_sampling}
We verify the effectiveness of \textit{reward-concentrated sampling} by comparing it with \textit{reward-contrasted sampling}, an inverse strategy that updates the policy using a subset with larger reward variance.
This comparison reveals how sampling strategies govern the tradeoff between reward optimization and diversity preservation.
Figure~\ref{fig:sample} shows the Pareto frontiers of the two strategies during fine-tuning, where reward-concentrated sampling consistently achieves a more favorable frontier.
At matched reward levels, it preserves substantially higher diversity across all four metrics by filtering reward outliers that would otherwise over-sharpen the policy and hinder exploration, promoting semantically meaningful behaviors for task alignment.
As illustrated in Figure~\ref{fig:sample_fig} in Appendix~\ref{app:more_prompting_results}, reward-concentrated sampling yields a broader distribution of high-fidelity generations (left), which is quantitatively confirmed (right) by 8.56\% $\sim$ 30.77\% diversity gains and a 37.74\% reward gain.

\subsection{Study of Noise-Conditioned Prompting}\label{sec:prompting}
To comprehensively evaluate the effect of noise-conditioned prompting, we investigate four ablations: 1) add noise to prompt embeddings at both the training and testing stages, 2) add noise at training only, 3) add noise at testing only, and 4) no noise. 
The noise schedule remains consistent across training and testing, as described in Sec.~\ref{sec:prompt}.
Figure~\ref{fig:prompt} summarizes the reward–diversity tradeoffs for four ablations.
Injecting noise into prompt embeddings during training consistently improves generation diversity by broadening the conditioning space, whereas adding comparable noise only at test time yields no noticeable gains.
These results show that prompt-level stochasticity is effective only during training, where it shapes the latent space and exploration dynamics, enabling persistent diversity even without test-time noise. 
Additional quantitative evidence is provided in Figure~\ref{fig:prompt_fig} in  Appendix~\ref{app:more_prompting_results}, reporting 2.86\%$\sim$15.38\% diversity gains and an 11.11\% reward gain.

\subsection{Analysis}
We conduct ablation studies on the weighting coefficient $\lambda$ in decoupled advantage aggregation and the clipping range $\sigma$ in clipped reward shaping. Figure~\ref{fig:lambda} shows that overly large $\lambda$ impedes convergence of the primary reward, as excessive emphasis on diversity regularization interferes with task optimization, while very small $\lambda$ leads to insufficient diversity preservation. These results suggest that $\lambda$ can effectively balance reward optimization and diversity maintenance.

Figure~\ref{fig:clip} examines the effect of the clipping range $\sigma$ on training dynamics. When $\sigma$ is too small, the diversity reward saturates at the clipping boundaries in early training, weakening effective diversity signals and causing rapid diversity decay. Combined with per-prompt normalization, this increases the variance of the diversity advantage in later stages, introducing gradient noise that slows reward convergence. 
A moderate clipping range alleviates these issues by suppressing extremes while preserving informative variation. 
Compared to a larger range (e.g., $\sigma{=}2$), $\sigma{=}1$ better controls outliers in the normalized diversity advantage, yielding more stable training. 
Overall, $\sigma$ affects training stability more than final performance, serving as a practical mechanism for robust optimization.

\section{Conclusion}\label{conclu}

In this paper, we study the challenge of diversity collapse in RL fine-tuning of generative models and introduce DRIFT, a principled framework that incentivizes output diversity for versatile image generation. Our method mitigates diversity collapse from three complementary dimensions: reward-concentrated sampling, noise-conditioned prompting, and diversity-aware reward shaping. Extensive experimental results demonstrate that DRIFT consistently achieves more favorable reward-diversity Pareto frontiers, reconciling high generation diversity with strong task alignment. These results suggest that diversity preservation is compatible with reward optimization and can contribute to robust and controllable generative modeling. We hope this analysis and framework provide a foundation for future diversity-preserving RL fine-tuning of generative models.

A limitation of our approach is its reliance on an auxiliary module for computing intra-group diversity, which introduces additional computational overhead. Future work could reduce this cost by leveraging the model's internal latent representations to derive diversity metrics. Another promising direction is to extend our approach to video and 3D generation, where diversity collapse is also prevalent.

\bibliographystyle{unsrtnat}
\bibliography{example_paper}

\newpage
\appendix

\renewcommand{\thetheorem}{\arabic{theorem}}

\section{Optimal Policy Invariance in DRIFT}\label{app:theorem}

Following the classical reward shaping study~\citep{ng1999policy}, we give the proof of Theorem~\ref{theo:shaping}, which guarantees the optimal policy invariance when incorporating the intra-group diversity as an intrinsic reward.

\setcounter{theorem}{0}
\begin{theorem}[Optimal Policy Invariance]
    Let $M\!=\!(\mathcal{S},\mathcal{A},P,R,\gamma)$ denote the MDP for the task of fine-tuning diffusion models with RL.
    $d(\cdot)\!:S\mapsto \mathbb{R}$ is a real-valued function that computes the intra-group diversity $d(s)$ of the state $s$ within a group of generation samples.
    We formulate $R_{\text{int}}(\cdot)\!:S\!\times\! A\!\times\! S\mapsto \mathbb{R}$ as an intrinsic reward function that is the difference between the diversities of adjacent states, such that for all $s\!\in\! S, a\!\in\! A, s'\!\in\! S$, $R_{\text{int}}(s,a,s')\!=\!\gamma d(s')\!-\!d(s)$. 
    Then, with any constant balancing ratio $\lambda$, every optimal policy in the transformed MDP $\tilde{M}\!=\!(\mathcal{S},\mathcal{A},P,R+\lambda R_{\text{int}},\gamma)$ will also be an optimal policy in $M$, and vice versa.
\end{theorem}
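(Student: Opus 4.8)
The plan is to invoke the classical potential-based reward shaping result of \citet{ng1999policy}, which states that if the shaping reward takes the form $F(s,a,s') = \gamma \Phi(s') - \Phi(s)$ for some potential function $\Phi:\mathcal{S}\mapsto\mathbb{R}$, then the optimal policy is preserved between $M$ and $\tilde{M}$. The key observation is that our intrinsic reward $R_{\text{int}}(s,a,s') = \gamma d(s') - d(s)$ is \emph{exactly} of this potential-based form, with the diversity function $d(\cdot)$ serving as the potential $\Phi$. The constant scaling by $\lambda$ does not break this structure, since $\lambda R_{\text{int}}(s,a,s') = \gamma(\lambda d(s')) - (\lambda d(s))$ is still a potential-based shaping term with potential $\lambda d(\cdot)$. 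So the first step is simply to recognize that $\lambda d$ is a valid potential.

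\medskip

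The core of the argument proceeds by relating the optimal $Q$-functions of the two MDPs. First I would define $Q^*_M$ and $Q^*_{\tilde{M}}$ as the optimal action-value functions for $M$ and $\tilde{M}$ respectively, both satisfying their Bellman optimality equations. The claim to establish is the shift identity
\begin{equation}\nonumber
Q^*_{\tilde{M}}(s,a) = Q^*_M(s,a) - \lambda d(s).
\end{equation}
To verify this, I would substitute the right-hand side into the Bellman optimality operator for $\tilde{M}$, whose reward is $R + \lambda R_{\text{int}} = R + \lambda(\gamma d(s') - d(s))$. The key cancellation is that the term $\lambda \gamma \mathbb{E}_{s'}[d(s')]$ coming from the shaping reward exactly cancels against the $\lambda \gamma \mathbb{E}_{s'}[\max_{a'} d(s')]$-type term introduced by the $-\lambda d$ correction inside the next-state value, leaving behind precisely the Bellman equation for $Q^*_M$ shifted by $-\lambda d(s)$. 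Because the subtracted potential $\lambda d(s)$ depends only on the state $s$ and not on the action $a$, taking $\arg\max_a$ over either $Q$-function yields the same maximizer, which gives optimal policy invariance in both directions.

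\medskip

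I expect the main obstacle to be handling the particular structure of the denoising MDP in Eq.~\ref{eq:mdp}, where the transition kernel is deterministic (a product of Dirac deltas) and the terminal reward is sparse, living only at $t=0$. In the standard \citet{ng1999policy} formulation the horizon is infinite or episodic with a clean terminal convention; here I must confirm that the telescoping of the potential terms along the finite denoising trajectory behaves correctly at the boundary. This is in fact where the trajectory-level computation in Eq.~\ref{eq:rint2} does the heavy lifting: summing $\gamma^t R_{\text{int}}$ over the trajectory telescopes to $\gamma^T d(\bm{x}_0^i) - d(\bm{x}_T^i)$, and with the boundary convention $d(\bm{x}_T^i)=0$ the total shaping contribution to any trajectory return is a fixed function of its endpoints, independent of the intermediate action choices. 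I would therefore present the proof both at the abstract $Q$-function level (following Ng et al.) and note the concrete trajectory-level specialization, emphasizing that the action-independence of $\lambda d(s)$ is what ultimately guarantees the $\arg\max$ is unchanged, establishing the ``and vice versa'' direction symmetrically.
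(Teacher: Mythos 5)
Your proposal is correct and follows essentially the same route as the paper's proof: both establish the shift identity $Q^*_{\tilde{M}}(s,a)=Q^*_M(s,a)-\lambda d(s)$ by substituting into the Bellman optimality equation (the $\lambda\gamma d(s')$ term from the shaping reward cancels against the correction pulled out of the max, which is legitimate precisely because $d(s')$ is action-independent), and both conclude via the unchanged $\arg\max_a$ and handle the converse by symmetry with $-R_{\text{int}}$, exactly as in \citet{ng1999policy}. Your additional remark about the trajectory-level telescoping in the denoising MDP is a correct observation but is not needed for the appendix proof itself, which operates at the abstract $Q$-function level.
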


\begin{proof}
    
    For the original MDP $M$, we know that its optimal Q-function $Q_M^*$ satisfies the Bellman optimality equation~\citep{sutton2018reinforcement}:
    \begin{equation}
        Q_M^*(s,a)=\mathbb{E}_{s'}\left[R(s,a,s')+\gamma\max_{a'\in A}Q_M^*(s',a')\right].
    \end{equation}
    With some simple algebraic manipulation, we can get:
    \begin{equation}
        Q_M^*(s,a) - \lambda d(s) = \mathbb{E}_{s'}\left[R(s,a,s')+\lambda\Bigl(\gamma d(s')-d(s)\Bigr)+\gamma\max_{a'\in A}\Bigl(Q_M^*(s',a')-\lambda d(s')\Bigr)\right].
    \end{equation}

    If we now define $\hat{Q}_{\tilde{M}}(s,a)\triangleq Q_M^*(s,a)-\lambda d(s)$ and substitute that and $R_{\text{int}}(s,a,s')=\gamma d(s')-d(s)$ into the previous equation, we can get:
    \begin{equation}
    \begin{aligned}
        \hat{Q}_{\tilde{M}}(s,a) & = \mathbb{E}_{s'}\left[R(s,a,s')+\lambda R_{\text{int}}(s,a,s')+\gamma\max_{a'\in A}\hat{Q}_{\tilde{M}}(s',a')\right] \\
        & = \mathbb{E}_{s'}\left[R'(s,a,s')+\gamma\max_{a'\in A}\hat{Q}_{\tilde{M}}(s',a')\right],
    \end{aligned}
    \end{equation}
    which is exactly the Bellman optimality equation for the transformed MDP $\tilde{M}$, where $\tilde{R}=R+\lambda R_{\text{int}}$ is the reward function for $M'$.
    Thus, $Q_{\tilde{M}}^*(s,a)=\hat{Q}_{\tilde{M}}(s,a)=Q_M^*(s,a)-\lambda d(s)$, and the optimal policy for $M'$ therefore satisfies:
    \begin{equation}
    \begin{aligned}
        \pi_{\tilde{M}}^*(s) & = \arg\max_{a\in A}Q_{\tilde{M}}^*(s,a) \\
        & = \arg\max_{a\in A}\Bigl[Q_M^*(s,a) - \lambda d(s)\Bigr] \\
        & = \arg\max_{a\in A}Q_M^*(s,a),
    \end{aligned}
    \end{equation}
    and is therefore also optimal in $M$.
    To show every optimal policy in $M$ is also optimal in $\tilde{M}$, simply apply the same proof with the roles of $M$ and $\tilde{M}$ interchanged (and using $-R_{\text{int}}$ as the intrinsic reward).
    This completes the proof.
\end{proof}

\section{Mathematical Derivations}\label{app:kl}

\subsection{Deriving the Optimum of the KL-Constrained Reward Maximization Objective}\label{app:pi_opt}
In this appendix, we will derive Eq.~\ref{eq:pi_opt}.
Analogous to Eq.~\ref{eq:obj}, we maximize the KL-constrained objective:
\begin{equation}
    J(\theta)= \mathbb{E}_{\bm{c},\bm{x}_0\sim \pi(\bm{x}_0|\bm{c})}\left[r(\bm{x}_0,\bm{c})\right]
    - \beta\cdot\mathbb{D}_{\text{KL}}\left[\pi(\bm{x}_0|\bm{c})||\pi_{\text{ref}}(\bm{x}_0|\bm{c})\right],
\end{equation}
under any reward function $r(\bm{x}_0,\bm{c})$, reference model $\pi_{\text{ref}}$ and a general non-parametric policy class.
The objective is reformulated as
\begin{equation}\label{eq:app_obj}
\begin{aligned}
& \max_{\pi} \mathbb{E}_{\bm{c},\bm{x}_0\sim \pi(\bm{x}_0|\bm{c})}\left[r(\bm{x}_0,\bm{c})\right]- \beta\mathbb{D}_{\text{KL}}\left[\pi(\bm{x}_0|\bm{c})||\pi_{\text{ref}}(\bm{x}_0|\bm{c})\right] \\
= & \max_{\pi} \mathbb{E}_{\bm{c}} \mathbb{E}_{\bm{x}_0 \sim \pi(\bm{x}_0|\bm{c})} \left[ r(\bm{x}_0,\bm{c}) - \beta \log \frac{\pi(\bm{x}_0|\bm{c})}{\pi_{\text{ref}}(\bm{x}_0|\bm{c})} \right] \\
= & \min_{\pi} \mathbb{E}_{\bm{c}} \mathbb{E}_{\bm{x}_0 \sim \pi(\bm{x}_0|\bm{c})} \left[ \log \frac{\pi(\bm{x}_0|\bm{c})}{\pi_{\text{ref}}(\bm{x}_0|\bm{c})} - \frac{1}{\beta} r(\bm{x}_0,\bm{c}) \right] \\
= & \min_{\pi} \mathbb{E}_{\bm{c}} \mathbb{E}_{\bm{x}_0 \sim \pi(\bm{x}_0|\bm{c})} \left[ \log \frac{\pi(\bm{x}_0|\bm{c})}{\frac{1}{Z(\bm{c})} \pi_{\text{ref}}(\bm{x}_0|\bm{c}) \exp \left( \frac{1}{\beta} r(\bm{x}_0,\bm{c}) \right)} - \log Z(\bm{c}) \right],
\end{aligned}
\end{equation}
where the partition function is:
\begin{equation}
    Z(\bm{c})=\sum_{\bm{x}_0}\pi_{\text{ref}}(\bm{x}_0|\bm{c}) \exp \left( \frac{1}{\beta} r(\bm{x}_0,\bm{c}) \right).
\end{equation}

Note that the partition function is a function of only $\bm{c}$ and the reference policy $\pi_{\text{ref}}$, but does not depend on the policy $\pi$. 
We can now define
\begin{equation}
    \pi^*(\bm{x}_0|\bm{c}) = \frac{1}{Z(\bm{c})}\pi_{\text{ref}}(\bm{x}_0|\bm{c}) \exp \left( \frac{1}{\beta} r(\bm{x}_0,\bm{c}) \right),
\end{equation}
which is a valid probability distribution as $\pi^*(\bm{x}_0|\bm{c})\ge 0,~\forall \bm{x}_0$ and $\sum_{\bm{x}_0}\pi^*(\bm{x}_0|\bm{c})=1$.
Next, we can re-organize the objective in Eq.~\ref{eq:app_obj} as
\begin{eqnarray}
\!\!\!\!\!\!\!\! && \min_{\pi} \mathbb{E}_{\bm{c}} \mathbb{E}_{\bm{x}_0 \sim \pi(\bm{x}_0|\bm{c})} \left[ \log \frac{\pi(\bm{x}_0|\bm{c})}{\frac{1}{Z(\bm{c})} \pi_{\text{ref}}(\bm{x}_0|\bm{c}) \exp \left( \frac{1}{\beta} r(\bm{x}_0,\bm{c}) \right)} - \log Z(\bm{c}) \right]  \\
= \!\!\!\!\!\!\!\! && \min_{\pi} \mathbb{E}_{\bm{c}}\left[ \mathbb{E}_{\bm{x}_0 \sim \pi(\bm{x}_0|\bm{c})}\left[ \log\frac{\pi(\bm{x}_0|\bm{c})}{\pi_{\text{ref}}(\bm{x}_0|\bm{c})} \right] - \log Z(\bm{c})\right]  \label{eq:z1}\\
= \!\!\!\!\!\!\!\! && \mathbb{E}_{\bm{c}}\left[\mathbb{D}_{\text{KL}}[\pi(\bm{x}_0|\bm{c})||\pi^*(\bm{x}_0|\bm{c})] - \log Z(\bm{c})\right] \\
= \!\!\!\!\!\!\!\! && \mathbb{E}_{\bm{c}}\left[\mathbb{D}_{\text{KL}}(\pi(\bm{x}_0|\bm{c})||\pi^*(\bm{x}_0|\bm{c})) \right]. \label{eq:z2}
\end{eqnarray}
Eq.~\ref{eq:z1} comes from that $Z(\bm{c})$ is not a function of $\bm{x}_0$, and Eq.~\ref{eq:z2} comes from that $Z(\bm{c})$ does not depend on $\pi$.
Gibbs' inequality establishes that $\mathbb{D}_{\text{KL}}(\pi||\pi^*) \geq 0$, with equality held if and only if the distributions are identical.
Hence, we have the optimal solution:
\begin{equation}
    \pi(\bm{x}_0|\bm{c}) = \pi^*(\bm{x}_0|\bm{c}) = \frac{1}{Z(\bm{c})}\pi_{\text{ref}}(\bm{x}_0|\bm{c}) \exp \left( \frac{1}{\beta} r(\bm{x}_0,\bm{c}) \right),
\end{equation}
for all $\bm{c}$. 
This completes the derivation.

\subsection{Deriving the Optimal Policy When KL Coefficient Approaches Zero}\label{app:pi_beta}
In this appendix, we will derive Eq.~\ref{eq:beta_to_zero} that shows the behavior of the optimal policy $\pi^*$ as $\beta\to 0^+$.
Let $\bm{x}_0^*=\mathop{\arg\max}_{\bm{x}_0}r(\bm{x}_0,\bm{c})$ be the unique global maximum of the reward function $r$.
We can reformulate the distribution to characterize the density at any point $\bm{x}_0$ relative to the density at the global maximum $\bm{x}_0^*$ as 
\begin{equation}
    \frac{\pi^*(\bm{x}_0|\bm{c})}{\pi^*(\bm{x}_0^*|\bm{c})} = \frac{\pi_{\text{ref}}(\bm{x}_0|\bm{c})}{\pi_{\text{ref}}(\bm{x}_0^*|\bm{c})} \exp\left( \frac{r(\bm{x}_0,\bm{c}) - r(\bm{x}_0^*,\bm{c})}{\beta} \right).
\end{equation}
As $\beta\to 0^+$, we have
\begin{itemize}
    \item If $\bm{x}_0=\bm{x}_0^*$, the exponent is $0$, and the ratio is $1$.
    \item If $r(\bm{x}_0,\bm{c}) < r(\bm{x}_0^*,\bm{c})$, then $r(\bm{x}_0,\bm{c}) - r(\bm{x}_0^*,\bm{c})$ is a negative constant, leading to the fraction $\frac{r(\bm{x}_0,\bm{c}) - r(\bm{x}_0^*,\bm{c})}{\beta}\to -\infty$ and the ratio $\frac{\pi^*(\bm{x}_0|\bm{c})}{\pi^*(\bm{x}_0^*|\bm{c})}\to 0$.
\end{itemize}

This implies that for any $\bm{x}_0$ where the reward is not maximal, the relative probability mass drops to zero as 
\begin{equation}
    \lim_{\beta \to 0^+} \frac{\pi^*(\bm{x}_0|\bm{c})}{\pi^*(\bm{x}_0^*|\bm{c})} = 0 \quad \text{for all } \bm{x}_0 \neq \bm{x}_0^*.
\end{equation}
Because the total probability must integrate to $1$, but the density at all non-maximal points vanishes, the entire probability mass must pile up at the location of the maximum reward.
Formally, for any smooth test function $f(\bm{x}_0)$:
\begin{equation}
    \lim_{\beta \to 0^+} \int \pi^*(\bm{x}_0) f(\bm{x}_0) d\bm{x}_0 = f(\bm{x}_0^*).
\end{equation}
This is the defining property of the Dirac delta distribution $\delta(\bm{x}_0 - \bm{x}_0^*)$.
This completes the proof.

\section{Quantifying Intra-Group Diversity}\label{app:div}
Given a pair of output samples $(\bm{x}_0^i,\bm{x}_0^j)$ generated from the same prompt, we project them into a latent embedding space using an encoder $\mathcal{E}$ and calculate their dissimilarity as the Euclidean distance between the corresponding embeddings:
\begin{equation}\label{eq:metric}
d\left(\bm{x}_0^i, \bm{x}_0^j\right)= \left\| \mathcal{E}(\bm{x}_0^i) -\mathcal{E}(\bm{x}_0^j) \right\|^2.
\end{equation}

For a group of samples $\left\{\mathbf{x}_0^1, \mathbf{x}_0^2, \ldots, \mathbf{x}_0^G\right\}$, we compute their pairwise dissimilarities using the diversity metric in Eq.~\ref{eq:metric}, resulting in a $G\!\times\! G$ matrix where the element of row $i$ and column $j$ denotes the diversity between sample $\bm{x}_0^i$ and $\bm{x}_0^j$.
The diversity of sample $\mathbf{x}_0^i$ within the group is obtained by simply averaging its dissimilarities to all other outputs, i.e., averaging across the $i$-th row in the diversity matrix as 
\begin{equation}
d\left(\mathbf{x}_0^i\right)=\frac{1}{G-1} \sum\nolimits_{i \neq j}^G d\left(\bm{x}_0^i, \bm{x}_0^j\right).
\end{equation}
A higher value of $d(\mathbf{x}_0^i)$ indicates that the $i$-th sample is less similar to the other outputs within the group, reflecting greater intra-group diversity. 

The most straightforward choice for the pre-trained encoder $\mathcal{E}$ in Eq.~\ref{eq:metric} is a variational auto-encoder (VAE), which serves as the de facto standard for obtaining latent representations in modern diffusion models~\citep{rombach2022high}.
However, such encoders may fail to capture meaningful perceptual or semantic variations in generated images.
Instead, we utilize DreamSim~\citep{fu2023dreamsim}, a model trained by concatenating CLIP, OpenCLIP, and DINO embeddings, and subsequently fine-tuned on human perceptual judgments. 
DreamSim is particularly well-suited for RFT that typically addresses downstream objectives like text-image alignment and human-centric evaluation.
To this end, we design a simple, easy-to-implement metric to quantify image diversity using image-level representations extracted from an off-the-shelf pre-trained encoder.
As a general framework, our method is also compatible with any other diversity metrics.

\section{Computation of Diversity Evaluation Metrics}\label{app:diversity_metrics}

We construct the generation image set using a fixed set of prompts and a controlled sampling protocol. Specifically, we select 40 representative prompts from the evaluation set, ensuring consistency across all methods evaluated, including both the base model and the fine-tuned models. For each prompt, we generate 40 images using identical random seeds across different models, resulting in a total of 1,600 generated images per method.

\textbf{DreamSim diversity} uses the  DreamSim library~\cite{fu2023dreamsim} to compute  DreamSim diversity as the variance of DreamSim embeddings of 40 generations for a given prompt, averaged across 40 prompts. Namely,

\begin{equation}
 \text{DreamSim\_diversity}=\frac{1}{40} \sum_{k=1}^{40} \frac{2}{40 \cdot 39} \sum_{1 \leq i<j \leq 40}\left\|\operatorname{DreamSim}\left(o_i^k\right)-\operatorname{DreamSim}\left(o_j^k\right)\right\|^2,
\end{equation}

where $o_i^k$ denotes the $i$-th generation for the $k$-th prompt. A higher DreamSim Diversity value indicates greater variation among generated samples, reflecting higher generative diversity.

\textbf{ClipScore diversity} uses the open\_clip library~\cite{ilharco2021openclip} to compute ClipScores. ClipScore diversity is computed
in analogy with DreamSim diversity.

\textbf{Generalized Recall}~\cite{sajjadi2018assessing} introduces the classic concepts of precision and recall to the study of generative models, providing distinct conclusions that are otherwise confounded by metrics like FID. While high precision implies a higher degree of realism and fidelity of the images compared to the base distribution, high recall implies higher coverage of the data distribution by the generator, signifying diversity. The approach proposed (e.g., \cite{kynkaanniemi2019improved}) forms non-parametric representations of the data manifolds using overlapping hyperspheres defined by the k-Nearest Neighbors (kNN) technique. Following this method, binary assignments are used to compute the recall. We specifically compute the recall of the generated distribution with respect to the distribution of the base model, which covers multiple modes of the dataset. For our experiments, the neighborhood size is set to $ k=10 $.

\textbf{Vendi Score} (\cite{friedman2022vendi}) is a diversity metric that quantifies the effective number of distinct modes captured by a set of images. Unlike distance-based metrics, the Vendi Score provides a single value that represents the effective rank or intrinsic dimensionality of the generated feature distribution. It operates by first extracting high-dimensional feature embeddings from the images. These embeddings are then used to construct a kernel matrix $K$, where each entry $K_{ij}$ measures the similarity between samples $i$ and $j$. The score is computed by applying a mathematical function (specifically, the effective rank calculation) to the eigenvalues $\lambda_i$ of this normalized kernel matrix, often using the exponential of the negative entropy of the eigenvalues:
\begin{equation}
\text{Vendi}(\mathbf{E}) = \exp\left( - \sum_i p_i \ln p_i \right), \quad \text{where } p_i = \frac{\lambda_i}{\sum_j \lambda_j} .
\end{equation}
A higher Vendi Score indicates greater diversity, as it implies that the images are more spread out in the feature space, corresponding to a higher effective number of captured modes. The metric is invariant to the sample size, making it a robust measure for comparing the diversity of different generative models.

\section{Baselines}\label{app:baselines}
We compare DRIFT to competitive reinforcement learning–based fine-tuning baselines under identical experimental settings, including the pretrained base model as well as DDPO, GRPO, and GRPO-KL. We summarize each baseline as below:

\begin{itemize}[itemsep=0pt, leftmargin=1.2em]
    \item \textbf{Base model}~\cite{rombach2022high}: pretrained base models used in all experiments (SDv1.5 or  SD3.5-M).
    \item \textbf{CADS}~\citep{sadat2024cads}: an inference-time diversity baseline using timestep-dependent conditioning perturbations.
    \item \textbf{DDPO}~\cite{black2024training}: formulates a policy gradient algorithm to optimize diffusion models.
    \item \textbf{GRPO}~\cite{shao2024deepseekmath}: computes advantages from group-level comparisons to estimate policy gradients.
    \item \textbf{GRPO-KL}~\cite{liu2025flow}: extends GRPO with KL regularization to prevent over-optimization.
    \item \textbf{GARDO}~\citep{he2025gardo}: a diversity-preserving RL baseline with gated KL penalties and diversity-weighted advantage reweighting.

\end{itemize}

\section{Details of Sample Selection}\label{app:sample}

This appendix provides implementation details of the sample selection strategies used to construct training batches in our GRPO fine-tuning experiments.
For each prompt, the policy generates a pool of $2G$ candidate outputs $\{\mathbf{o}_1,\ldots,\mathbf{o}_{2G}\}$, from which $G$ samples are selected to form the effective training batch.
We compute the pairwise distances in a perceptual embedding space.
Specifically, each output $\mathbf{o}_i$ is encoded by a DreamSim encoder $\mathcal{E}(\cdot)$, and the pairwise distances \( D_{ij} \) are given by:
\begin{equation}
D_{ij} = \|\mathcal{E}(\mathbf{o}_i)-\mathcal{E}(\mathbf{o}_j)\|_2 .
\end{equation}
This results in a perceptual distance matrix \( \mathbf{D} \in \mathbb{R}^{2G \times 2G} \).
Although the distances are computed in the perceptual embedding space rather than directly from scalar rewards, we empirically find that perceptually concentrated groups tend to exhibit higher reward means and lower, yet non-degenerate, reward variances than perceptually contrasted groups.
This avoids directly collapsing the group in reward space while still reducing reward outliers in the selected batch.

\mypara{Reward-Concentrated Selection.}
This strategy selects $G$ samples that are close to each other in the perceptual distance by using a nearest-neighbor criterion.
For each candidate $i$, we retrieve the indices of its $G-1$ nearest neighbors and compute the corresponding distance sum
\begin{equation}
s_i = \sum_{j \in \mathrm{min}(i;G-1)} D_{ij},
\end{equation}
where $\mathrm{min}(i;G-1)$ denotes the set of indices of the $G\!-\!1$ nearest distances in row $i$.
We choose the reference index $i^\star=\arg\min_i s_i$ and form the training batch by including this reference sample together with its $G-1$ nearest neighbors.

\mypara{Reward-Contrasted Selection.}
This strategy selects $G$ samples that are well separated in the perceptual distance by using a farthest-neighbor criterion.
For each candidate $i$, we retrieve its $G\!-\!1$ farthest neighbors and compute the corresponding distance sum
\begin{equation}
t_i = \sum_{j \in \mathrm{max}(i;G-1)} D_{ij},
\end{equation}
where $\mathrm{max}(i;G-1)$ denotes the set of indices of the $G\!-\!1$ largest distances in row $i$.
We choose the reference index $i^\star=\arg\max_i t_i$ and form the training batch by including this reference sample together with its $G-1$ farthest neighbors.

All other training components are kept identical across settings, with the two strategies differing only in how $G$ samples are selected from the $2G$ candidate pool.

\section{Annealed Prompt Embedding Noise}
\label{app:prompt_noise}
Our noise-conditioned prompting strategy is inspired by the Condition-Annealed Diffusion Sampler (CADS)~\cite{sadat2024cads}, which introduces timestep-dependent perturbations to conditioning signals during diffusion inference to promote generative diversity. CADS applies stronger perturbations at early diffusion steps and gradually reduces them as inference progresses, aligning stochasticity with the diffusion dynamics.

Following this principle, we inject timestep-annealed noise into the prompt embeddings during training-time generation. Let $e$ denote the clean prompt embedding. At the diffusion timestep $t$, we construct a perturbed prompt embedding $\tilde{e}_t$ as
\begin{equation}
\tilde{e}_t = \sqrt{w(t)}\, e +  \eta \cdot \sqrt{1 - w(t)}\, n, 
\quad n \sim \mathcal{N}(0, I),
\end{equation}
where $\eta$ controls the overall noise scale and $w(t) \in [0,1]$ is an annealing function that decreases with the diffusion timestep.  We adopt a piecewise linear annealing schedule for $w(t)$:
\begin{equation}
w(t) =
\begin{cases}
1, & t \le \tau_1, \\
\frac{\tau_2 - t}{\tau_2 - \tau_1}, & \tau_1 < t < \tau_2, \\
0, & t \ge \tau_2,
\end{cases}
\end{equation}
where $\tau_1$ and $\tau_2$ are user-defined thresholds that determine the transition from high to low noise. Since diffusion inference proceeds backward in time, this schedule applies stronger perturbations at earlier timesteps and gradually removes noise as generation converges.

Adding noise to the prompt embeddings alters their mean and variance, which can lead to instability when the noise scale is large. To mitigate this effect, we apply a rescaling operation similar to that used in CADS. Specifically, given a noisy embedding $\tilde{e}_t$ with empirical mean and standard deviation $\mu(\tilde{e}_t)$ and $\sigma(\tilde{e}_t)$, we compute a rescaled embedding
\begin{equation}
\tilde{e}^{\,\text{rescaled}}_t
= \frac{\tilde{e}_t - \mu(\tilde{e}_t)}{\sigma(\tilde{e}_t)} \sigma_e + \mu_e,
\end{equation}
where $\mu_e$ and $\sigma_e$ denote the mean and standard deviation of the clean prompt embeddings. The final prompt embedding is then obtained by mixing the rescaled and unrescaled embeddings:
\begin{equation}
\tilde{e}^{\,\text{final}}_t
= \psi \, \tilde{e}^{\,\text{rescaled}}_t + (1 - \psi)\, \tilde{e}_t,
\end{equation}
where $\psi \in [0,1]$ controls the strength of rescaling. This rescaling strategy improves numerical stability under high noise levels while preserving sufficient stochasticity for diversity enhancement. Unlike CADS, which applies condition annealing purely at inference time, our formulation integrates timestep-annealed prompt embedding noise into reinforcement learning fine-tuning, explicitly diversifying the conditioning space explored during optimization and helping mitigate diversity collapse. In our experiments, the noise scale is set to $ \eta = 0.05$, the annealing thresholds are $\tau_1 = 0.4$ and $\tau_2 = 1.0$ (with timesteps normalized to $[0,1]$), and the rescaling mixing factor is $\psi = 1$, which we found to work well in practice.

\section{Diversity Gain and Reward Gain}\label{app:rg_dg}

When measuring RG, we maintain approximately equivalent diversity levels using DreamSim Diversity as the reference, as illustrated by the yellow line.
Conversely, for DG, we maintain approximately equivalent reward levels, as indicated by the blue line.
Since it is infeasible to obtain checkpoints that induce exactly the same reward or diversity values for different methods, we adopt a stringent comparison protocol by selecting checkpoints for DRIFT that slightly exceed the baseline's coordinates. 
This approach confirms that our reported improvements represent a lower bound of the actual performance gap.

\section{Implementation Details}\label{app:imple}

Following prior work~\cite{black2024training, liu2025flow}, we use both Stable Diffusion v1.5 (SDv1.5)~\cite{rombach2022high} and Stable Diffusion 3.5 Medium (SD3.5-M) ~\cite{esser2024scaling} as the backbone models. Fine-tuning is performed using LoRA~\cite{hulora} applied to the attention layers of the UNet~\cite{ronneberger2015u} or Transformers, which significantly reduces training overhead while maintaining generation quality. 

For SDv1.5, we employ the DDIM sampler with 50 sampling steps under an SDE-based stochastic formulation~\cite{song2021denoising}. Noise is injected during the process to enable stochastic sampling and likelihood estimation, with a classifier-free guidance (CFG) scale of 5. For SD3.5-M, we follow the methodology of ~\cite{liu2025flow} but modify the sampling strategy during training. We increase the number of sampling timesteps to 16, utilizing a hybrid strategy of 14 stochastic SDE steps followed by 2 deterministic ODE steps ~\cite{song2021scorebased}, with a classifier-free guidance scale of 4.5. Notably, training is restricted to the initial 14 steps. We find that this configuration enhances the training efficacy and yields superior sampling quality. For evaluation, we utilize a deterministic ODE solver with 40 timesteps to generate samples.

We perform training on 45 common animals using two reward functions on SDv1.5, PickScore and HPSv2. For SD3.5-M, we train on a set of more complex compositional prompts involving objects, scenes, and indoor/outdoor contexts. The hyperparameter settings for each reward are listed in Table~\ref{app:hyper}. All experiments are conducted on a system with two NVIDIA RTX 4090 GPUs, using FP16 mixed-precision training with automatic mixed precision (AMP) and gradient scaling for improved computational efficiency and reduced memory consumption. The KL ratio $\beta$ is set to 0.001 for GRPO-KL. We use LoRA with $\alpha = 16$ and $r = 8$ for SDv1.5 and $\alpha = 64$ and $r = 32$ for  SD3.5-M. 

\begin{table}[ht]
\centering
\caption{List of hyperparameter configurations for PickScore and HPSv2.}
\setlength{\tabcolsep}{4pt} 
\resizebox{\textwidth}{!}{
\begin{tabular}{|l|c|c|c|c|}
\hline
\textbf{Hyperparameters} & \textbf{PickScore(SDv1.5)} & \textbf{HPSv2(SDv1.5)}  & \textbf{PickScore(SD3.5-M)} & \textbf{HPSv2(SD3.5-M)} \\
\hline
Random seed & 42 & 42 & 42 & 42 \\
Denoising timesteps ($T$) & 50 & 50 & 16 & 16 \\
group size (G) & 8 & 8 & 24 & 24 \\
Guidance scale & 5.0 & 5.0 & 4.5 & 4.5 \\
Policy learning rate & $1 \times 10^{-4}$ & $1 \times 10^{-4}$ & $3 \times 10^{-4}$ &  $3 \times 10^{-4}$ \\
Policy clipping range & $1 \times 10^{-4}$ & $1 \times 10^{-4}$ & $1 \times 10^{-4}$ & $1 \times 10^{-4}$ \\
Maximum gradient norm & 1.0 & 1.0 & 1.0 & 1.0 \\
Optimizer & AdamW & AdamW & AdamW & AdamW \\
Optimizer weight decay & $1 \times 10^{-4}$ & $1 \times 10^{-4}$ & $1 \times 10^{-4}$ & $1 \times 10^{-4}$ \\
Optimizer $\beta_1$ & 0.9 & 0.9 & 0.9 & 0.9 \\
Optimizer $\beta_2$ & 0.999 & 0.999 & 0.999 & 0.999 \\
Optimizer $\epsilon$ & $1 \times 10^{-8}$ & $1 \times 10^{-8}$ & $1 \times 10^{-8}$ & $1 \times 10^{-8}$ \\
Sampling batch size & 16 & 16 & 6 & 6 \\
Samples per epoch & 256 & 256 & 288 & 288 \\
Training batch size & 4 & 4 & 4 & 4\\
Gradient accumulation steps & 16 & 16 & 36 & 36\\
Gradient updates per epoch & 4 & 4 & 2 & 2\\
noise level $\alpha$   & 1.0 & 1.0 & 0.7 & 0.7\\
shaping ratio $\lambda$ & 0.5 & 0.5 & 0.6 & 0.6\\
predetermined upper bound $\sigma$  & 1.0 & 1.0 & 1.0 & 1.0\\
\hline
\end{tabular}}
\label{app:hyper}
\end{table}

\section{Reward Models}\label{app:reward}
We use two human-preference-based reward models, HPSv2 and PickScore, to provide external reward feedback during training. Both models are trained to predict pairwise preferences between images generated from the same prompt.

\textbf{HPSv2} is trained on a dataset containing approximately 434k images organized into pairwise comparisons. Each comparison consists of two images generated by different models using the same prompt and is annotated with a binary preference choice provided by a single annotator. The prompts are collected from DrawBench and DiffusionDB, where prompts from DiffusionDB are further sanitized using ChatGPT to reduce biases introduced by stylistic trigger words and to lower the overall NSFW score. This preprocessing results in a more controlled prompt distribution for training the reward model.

\textbf{PickScore} is trained on the Pick-a-Pic dataset, which is collected through a web-based application. Users are allowed to freely write prompts and are presented with two generated images per prompt. They are asked to select their preferred image or indicate a tie if no strong preference exists. While moderation is applied to remove users who generate NSFW content or exhibit low-quality annotation behavior (e.g., extremely rapid or random choices), the dataset still contains a notable amount of NSFW prompts. As a result, fine-tuning with PickScore may lead to an increased tendency to generate NSFW images, even when the input prompts are not explicitly NSFW.

\begin{figure*}[ht]\centering
\includegraphics[width=0.98\textwidth]{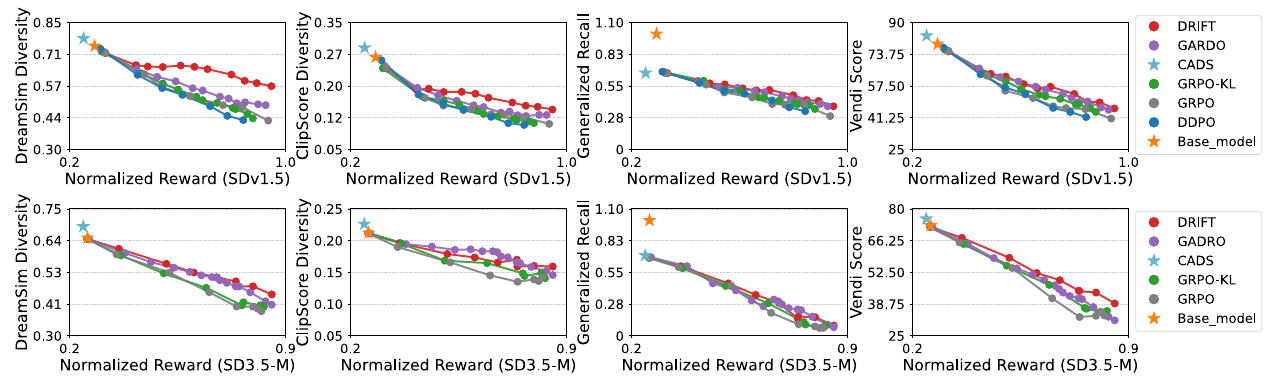}  
 \caption{
Reward-diversity comparison between DRIFT and baselines, with SDv1.5 and SD3.5-M fine-tuned using HPSv2 reward.
 }
\label{fig:hpsv2} 
\end{figure*}

\begin{figure*}[ht]\centering
\includegraphics[width=1\textwidth]{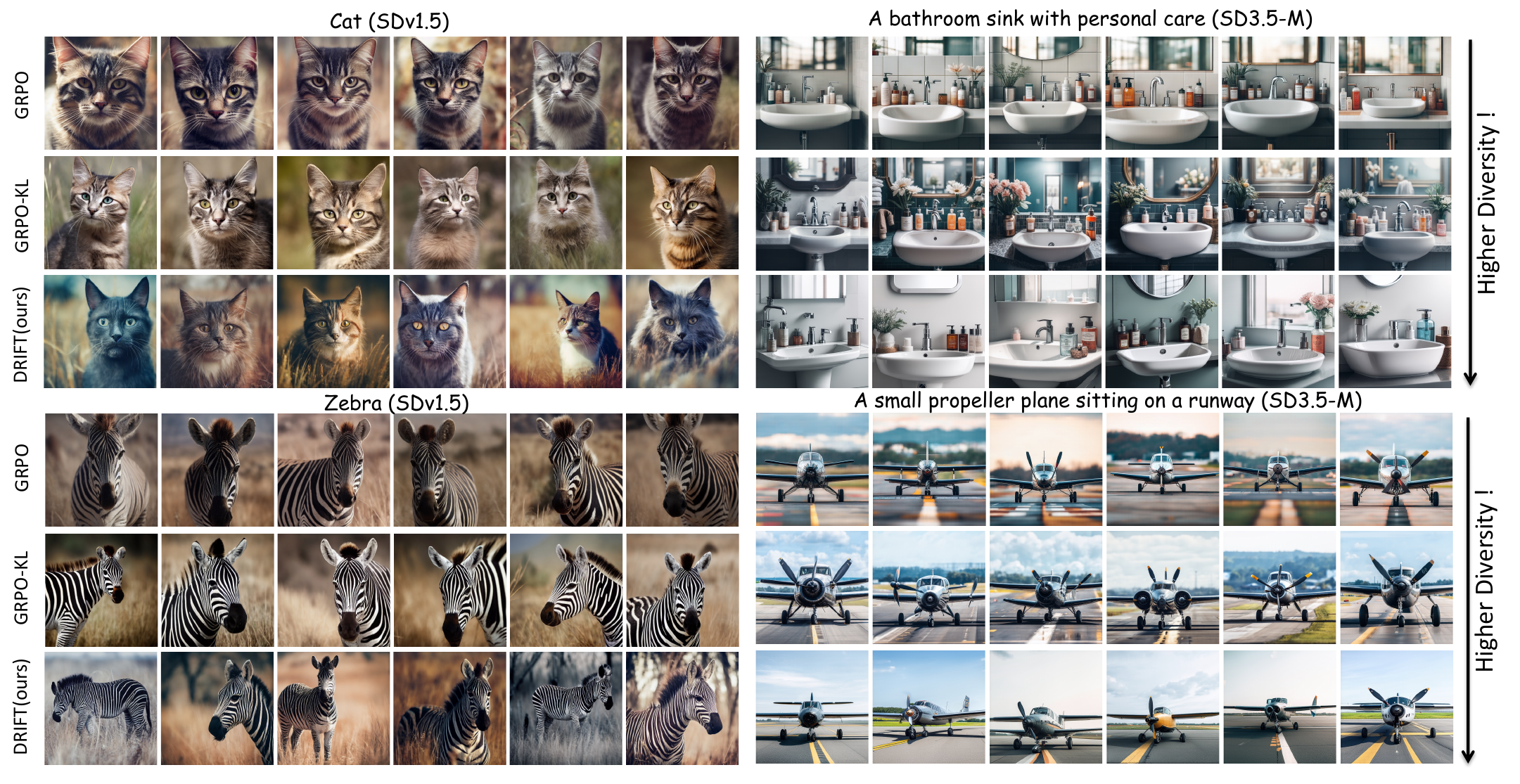}  
 \caption{ 
Qualitative diversity comparisons show that baseline methods suffer from diversity collapse, producing repetitive samples with similar breeds, poses, and backgrounds, whereas DRIFT maintains high fidelity with substantially greater diversity. All models  are fine-tuned on SDv1.5 and SD3.5-M using PickScore as the reward function.}
\label{fig:main4} 
\end{figure*}

\section{More Results of Diversity-Aware Reward Shaping}\label{app:more_diversity_results}

Due to space constraints, additional results for Section~\ref{sec:rewarding} are provided in this appendix.  Figure~\ref{fig:hpsv2} presents Pareto frontiers for fine-tuning SDv1.5 and SD3.5-M with HPSv2. Figure~\ref{fig:main2}, Figure~\ref{fig:main3}, and Figure~\ref{fig:main4} present additional qualitative comparisons between DRIFT and baseline fine-tuning methods. While all models are fine-tuned on SDv1.5 or SD3.5-M using HPSv2 or PickScore as the reward function, clear differences in diversity emerge. Baseline methods frequently exhibit diversity collapse, generating visually similar samples that repeat the same breeds, poses, and background compositions across different generations. Although individual samples may achieve high reward scores, the overall output distribution is narrow and lacks meaningful variation.

In contrast, DRIFT consistently produces samples with substantially greater diversity while maintaining high visual fidelity. The generated images display richer variation in object appearance, pose, and scene composition, without introducing noticeable artifacts or degradation in image quality. These results suggest that DRIFT effectively mitigates reward-driven mode collapse and encourages exploration of diverse yet coherent solutions within the learned generative space.

Importantly, the improved diversity observed in DRIFT is not limited to superficial texture or color changes. Instead, it reflects genuine structural and compositional variation, indicating that DRIFT promotes diversity at a semantic level rather than relying on low-level perturbations to increase perceptual distance.

\begin{figure*}[ht]\centering
\includegraphics[width=1\textwidth]{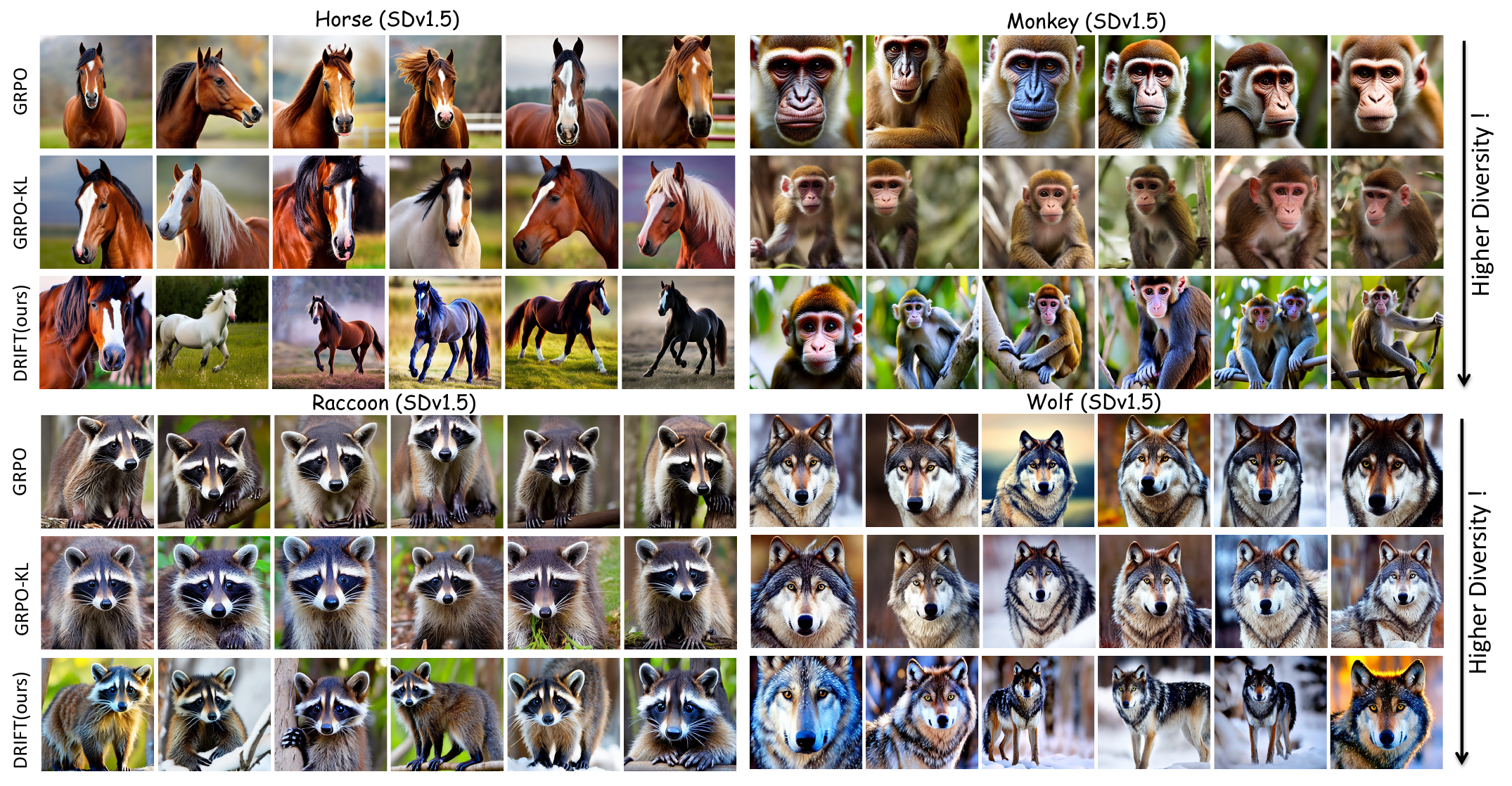}  
 \caption{ 
Qualitative diversity comparisons show that baseline methods suffer from diversity collapse, producing repetitive samples with similar breeds, poses, and backgrounds, whereas DRIFT maintains high fidelity with substantially greater diversity. All models are fine-tuned on SDv1.5 using HPSv2 as the reward function.}
\label{fig:main2} 
\end{figure*}

\begin{figure*}[ht]\centering
\includegraphics[width=1\textwidth]{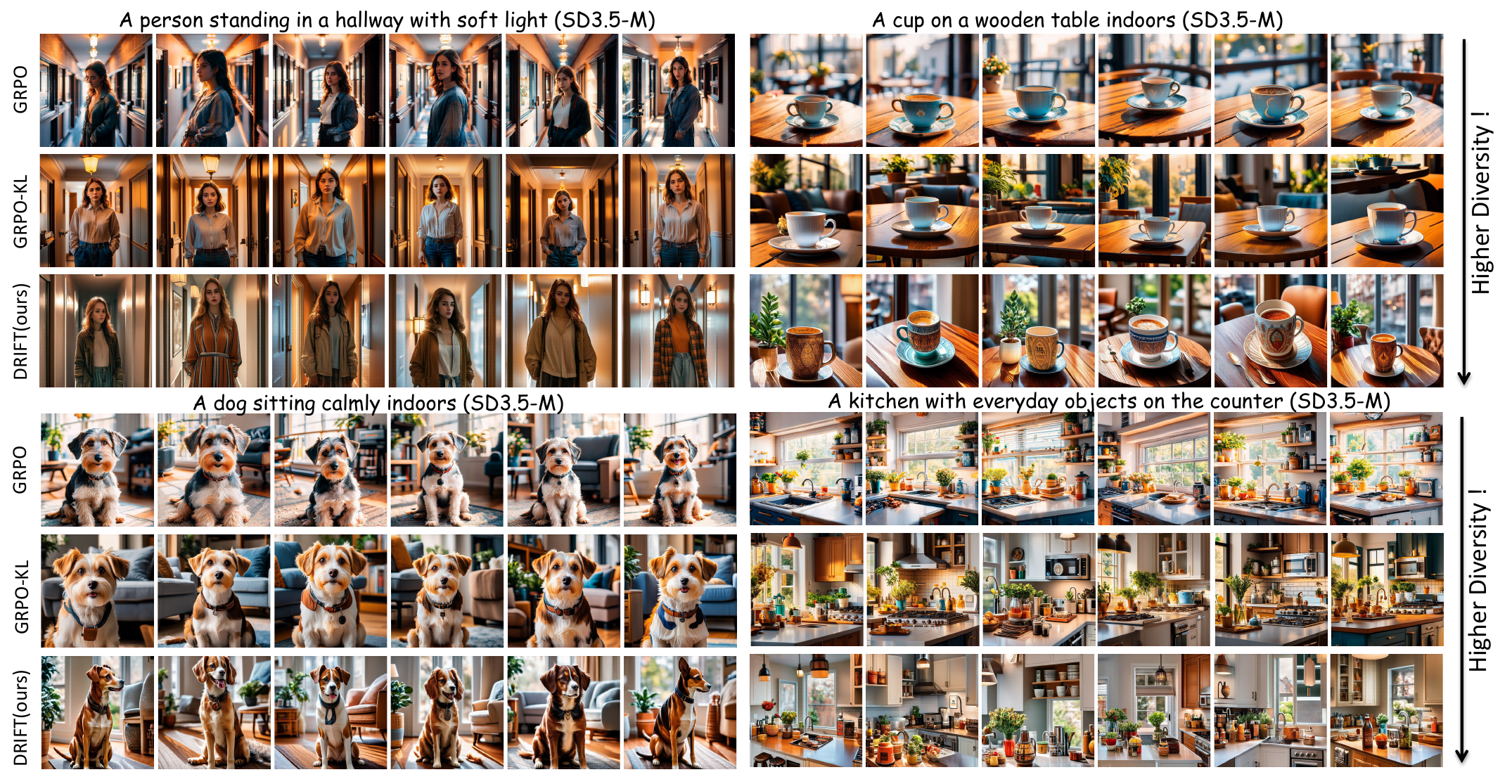}  
 \caption{ 
Qualitative diversity comparisons show that baseline methods suffer from diversity collapse, producing repetitive samples with similar breeds, poses, and backgrounds, whereas DRIFT maintains high fidelity with substantially greater diversity. All models are fine-tuned on SD3.5-M using HPSv2 as the reward function.}
\label{fig:main3} 
\end{figure*}

\section{More Results for Reward-Concentrated Sampling and Noise-Conditioned Prompting}\label{app:more_prompting_results}
Due to space constraints, the qualitative and quantitative results of  Section~\ref{sec:exp_sampling} and Section~\ref{sec:prompting} are presented in this appendix.

\begin{figure*}[ht]\centering
\includegraphics[width=0.95\textwidth]{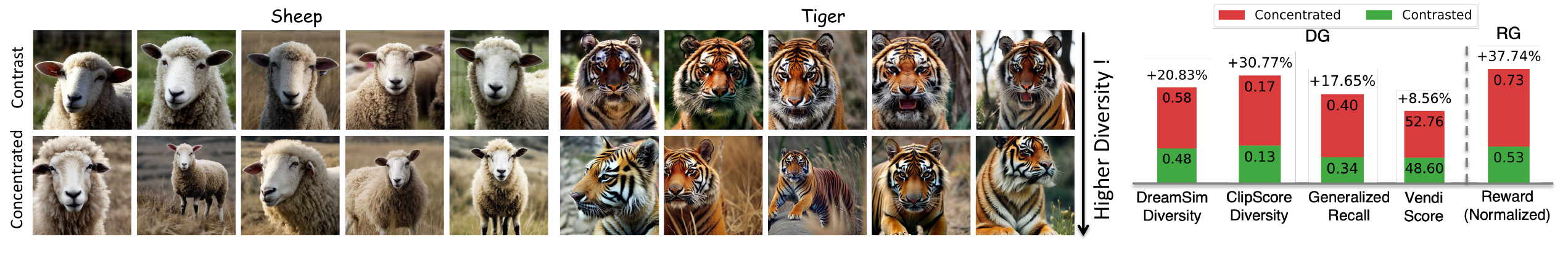}  
\caption{\textbf{Left}: Qualitative results show that reward-concentrated sampling preserves fidelity while improving diversity when fine-tuning SDv1.5 with PickScore. \textbf{Right}: Quantitative results show higher diversity and reward gain than reward-contrasted sampling.}
\label{fig:sample_fig} 
\end{figure*}

\begin{figure*}[ht]\centering
\includegraphics[width=1\textwidth]{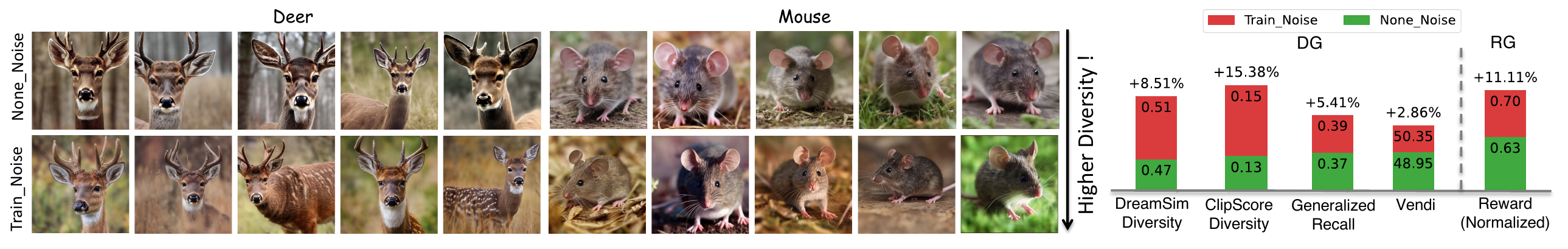}  

\caption{ 
\textbf{Left}: Qualitative results show that training with prompting noise preserves fidelity while improving diversity in PickScore fine-tuning of SDv1.5. \textbf{Right}: Quantitative results show higher diversity and reward gain than training without noise.}

\label{fig:prompt_fig} 
\end{figure*}

\section{Impact Statement}\label{app:impact}

This paper presents DRIFT, a reinforcement fine-tuning framework designed to mitigate diversity collapse when optimizing preference/quality rewards for image generation models, improving alignment while preserving output variability and coverage. The method can increase the practicality and robustness of generative systems in creative design, content production, data augmentation, and human–AI interaction: better-preserved diversity reduces repetitive outputs and improves coverage of long-tail concepts and styles.

Potential risks include that stronger and more diverse generation may lower the barrier for large-scale synthesis of misleading or deceptive imagery, intensify copyright and likeness-right disputes, or amplify biases present in training data (e.g., stereotypical depictions of demographic groups). Moreover, if the reward model is biased, RL fine-tuning may further entrench or amplify such bias. To mitigate these risks in deployment, we recommend combining content safety filtering with watermarking/provenance mechanisms; auditing reward models and data for bias; restricting generation and re-training in high-risk domains (e.g., political figures, minors, sensitive contexts); and reporting both alignment and diversity/fairness metrics to avoid unintended consequences driven by a single reward signal.


\end{document}